\DeclareMathAlphabet{\mathsf}{OT1}{cmss}{m}{n}
\SetMathAlphabet{\mathsf}{bold}{OT1}{cmss}{bx}{n}
\providecommand{\norm}[1]{\|#1\|}
\newtheorem*{theorem*}{Theorem}
\title{\huge \bf Effective Minkowski Dimension of Deep Nonparametric Regression:\\ Function Approximation and Statistical Theories}
\author{Zixuan Zhang ~ Minshuo Chen ~ Mengdi Wang ~ Wenjing Liao ~ Tuo Zhao\thanks{Zixuan Zhang and Tuo Zhao are affiliated with School of Industrial and Systems Engineering at Georgia Tech; Minshuo Chen and Mengdi Wang are affiliated with Electrical and Computer Engineering at Princeton University; Wenjing Liao is affiliated with School of Mathematics at Georgia Tech; Email: $\{$zzhang3105, wliao60, tourzhao$\}$@gatech.edu, $\{$mc0750, mengdiw$\}$@princeton.edu.}}
\newcommand{\commentout}[1]{}
\begin{document}

\maketitle

\begin{abstract}

Existing theories on deep nonparametric regression have shown that when the input data lie on a low-dimensional manifold, deep neural networks can adapt to the intrinsic data structures. In real world applications, such an assumption of data lying exactly on a low dimensional manifold is stringent. This paper introduces a relaxed assumption that the input data are concentrated around a subset of $\RR^d$ denoted by $\cS$, and the intrinsic dimension of $\cS$ can be characterized by a new complexity notation -- effective Minkowski dimension. We prove that, the sample complexity of deep nonparametric regression only depends on the effective Minkowski dimension of $\cS$ denoted by $p$. We further illustrate our theoretical findings by considering nonparametric regression with an anisotropic Gaussian random design $N(0,\Sigma)$, where $\Sigma$ is full rank. When the eigenvalues of $\Sigma$ have an exponential or polynomial decay, the effective Minkowski dimension of such an Gaussian random design is $p=\cO(\sqrt{\log n})$ or $p=\cO(n^\gamma)$, respectively, where $n$ is the sample size and $\gamma\in(0,1)$ is a small constant depending on the polynomial decay rate. Our theory shows that, when the manifold assumption does not hold, deep neural networks can still adapt to the effective Minkowski dimension of the data, and circumvent the curse of the ambient dimensionality for moderate sample sizes.
\end{abstract}

\section{Introduction}

Deep learning has achieved impressive successes in various real-world applications, such as computer vision \citep{krizhevsky2012imagenet, goodfellow2014generative, Long_2015_CVPR}, natural language processing \citep{graves2013speech, bahdanau2014neural, young2018recent}, and robotics \citep{gu2017deep}. One notable example of this is in the field of image classification, where the winner of the 2017 ImageNet challenge achieved a top-5 error rate of just 2.25\% \citep{hu2018squeeze} using a training dataset of 1 million labeled high resolution images in 1000 categories. Deep neural networks have been shown to outperform humans in speech recognition, with a 5.15\% word error rate using the LibriSpeech training corpus \citep{panayotov2015librispeech}, which consists of approximately 1000 hours of 16kHz read English speech from 8000 audio books.

The remarkable successes of deep learning have challenged conventional machine learning theory, particularly when it comes to high-dimensional data. Existing literature has established a minimax lower bound of sample complexity $n\gtrsim\epsilon^{-(2s+d)/s}$ for learning $s$-H\"{o}lder functions in $\RR^d$ with accuracy $\epsilon$ \citep{gyorfi2006distribution}. 
This minimax lower bound, however, is far beyond the practical limits. For instance, the images in the ImageNet challenge are of the resolution $224\times224=50176$, while the sample size of $1.2$ million is significantly smaller than the theoretical bound.


Several recent results have attempted to explain the successes of deep neural networks
by taking the low-dimensional structures of data into consideration\citep{chen2019efficient,chen2022nonparametric,nakada2020adaptive,liu2021besov,schmidt2019deep}. Specifically, \citet{chen2022nonparametric} shows that when the input data are supported on a $p$-dimensional Riemannian manifold embedded in $\RR^d$, deep neural networks can capture the  low-dimensional  intrinsic structures of the manifold. The sample complexity in \citet{chen2022nonparametric} depends on the intrinsic dimension $p$, which circumvents the curse of ambient dimension $d$; \citet{nakada2020adaptive} assumes that the input data are supported on a subset of $\RR^d$ with Minkowski dimension $p$, and establishes a sample complexity similar to \citet{chen2022nonparametric}. \citet{liu2021besov} considers a  classification problem, and show that convolutional residual networks enjoy similar theoretical properties  to \citet{chen2022nonparametric}.

Considering the complexity of real world applications, however, the assumptions of data lying exactly on a low-dimensional manifold or a set with low Minkowski dimension are stringent. To bridge such a gap between theory and practice, we consider a relaxed assumption that the input data $X$ are approximately supported on a subset of $\RR^d$ with certain low-dimensional structures denoted by $\cS$. Roughly speaking, there exists a sufficiently small $\tau$ such that we have $\PP(X\notin\cS)=\tau$, where $\cS$ can be characterized by a new complexity notation -- effective Minkowski dimension. We then prove that under proper conditions, the sample complexity of nonparametric regression using deep neural networks only depends on the effective Minkowski dimension of $\cS$ denoted by $p$. Our assumption arises from practical motivations: The distributions of real-world data sets often exhibit a varying density. In practice, the low-density region can be neglected, if our goal is to minimize the $L_2$ prediction error in expectation. 

Furthermore, we illustrate our theoretical findings by considering nonparametric regression with an anisotropic multivariate Gaussian randomly sampled from $N(0,\Sigma)$ design in $\RR^d$. Specifically, we prove that when the eigenvalues of $\Sigma$ have an exponential decay, we can properly construct $\cS$ with the effective Minkowski dimension $p=\min(\cO(\sqrt{\log n}),d)$. Moreover, when the eigenvalues of $\Sigma$ have a polynomial decay, we can properly construct $\cS$ with the effective Minkowski dimension $p=\min(\cO(n^\gamma,d))$, where $\gamma\in(0,1)$ is a small constant. Our proposed effective Minkovski dimension is a non-trivial generalization of the manifold intrinsic dimension \citep{chen2022nonparametric} or the Minkowski dimension  \citep{nakada2020adaptive}, as both the intrinsic dimension or Minkowski dimension of the aforementioned $\cS$'s are $d$, which can be significantly larger than $p$ for moderate sample size $n$.

An ingredient in our analysis is an approximation theory of deep ReLU networks for $\beta$-Hölder functions \citep{YAROTSKY2017103,nakada2020adaptive,chen2019efficient}. Specifically, we show that, in order to uniformly approximate $\beta$-Hölder functions on a properly selected $\cS$ up to an $\epsilon$ error, 
the network consists of at most $O(\epsilon^{-p/\beta})$ neurons and weight parameters, where $p$ is the effective Minkowski dimension of the input data distribution. The network size in our theory only weakly depends on the ambient dimension $d$, which circumvents the curse of dimensionality for function approximation using deep ReLU networks. Our approximation theory is established for the $L^2$ norm instead of the  $L^\infty$ norm in \citet{nakada2020adaptive,chen2019efficient}. The benefit is that we only need to approximate the function accurately on the high-density region, and allow for rough approximations on the low-density region. Such flexibility is characterized by our effective Minkowski dimension.


The rest of this paper is organized as follows: Section 2 reviews the background; Section 3 presents our functional approximation and statistical theories; Section 4 provides an application to Gaussian random design; Section 5 presents the proof sketch of our main results; Section 6 discusses related works and draws a brief conclusion.

\paragraph{Notations} Given a vector $v=(v_1,...,v_d)^\top\in\RR^d$, we define $\norm{v}_p^p=\sum_{j}|v_j|^p$ for $p\in[1,\infty)$ and $\norm{v}_{\infty}=\max_j|v_j|$. Given a matrix $W=[W_{ij}]\in\RR^{n\times m}$, we define $\norm{W}_{\infty}=\max_{i,j}|W_{ij}|$. We define the number of nonzero entries of $v$ and $W$ as $\norm{v}_0$ and $\norm{W}_0$, respectively. For a function $f(x)$, where $x\in\cX\subseteq \RR^d$, we define $\norm{f}_{\infty} = \max_{x\in\cX}|f(x)|$. We define $\norm{f}^2_{L^2(P)}=\int_{\cX}f^2(x)p(x)dx$, where $P$ is a continuous distribution defined on $\cX$ with the pdf $p(x)$.


\section{Background}

In nonparametric regression, the aim is to estimate a ground-truth regression function $f^*$ from i.i.d. noisy observations $\left\{(x_i, y_i)\right\}_{i=1}^n$. The data are generated via
\begin{align*}
y_i = f^*(x_i) + \xi_i,
\end{align*}
where the noise $\xi_i$'s are i.i.d. sub-Gaussian noises with $\EE[\xi_i]=0$ and variance proxy $\sigma^2$, which are independent of the $x_i$'s. To estimate $f^*$, we minimize the empirical quadratic loss over a concept class $\cF$, i.e.,
\begin{align}\label{eq: obj}
\hat{f} \in \argmin_{f \in \cF} \frac{1}{2n} \sum_{i=1}^n \left(f(x_i) - y_i \right)^2.
\end{align}
We assess the quality of estimator $\hat{f}$ through bounding $L^2$ distance between $\hat{f}$ and $f^*$, that is,
\begin{align*}
\norm{\hat{f} - f^*}_{L^2(P_{\rm data})}^2 \leq \gamma(n).
\end{align*}
Here $\gamma(n)$ is a function of $n$ describing the convergence speed and $P_{\rm data}$ is an unknown sampling distribution ofthe  $x_i$'s supported on $\cD_{\rm data}$.

Existing literature on nonparametric statistics has established an optimal rate of $\gamma(n) \lesssim n^{-\frac{2\alpha}{2\alpha + d}}$, when $f^*$ is $\alpha$-smooth with bounded functional norm, and $\cF$ is properly chosen \citep{wahba1990spline, altman1992introduction, fan1996local, tsybakov2008introduction, gyorfi2006distribution}.

The aforementioned rate of convergence holds for any data distribution $P_{\rm data}$. For high-dimensional data, the convergence rate suffers from the curse of dimensionality. However, in many practical applications, $P_{\rm data}$ exhibits important patterns. For example, data are highly clustered in certain regions, while scarce in the rest of the domain. In literature, a line of work studies when $P_{\rm data}$ is supported on a low-dimensional manifold \citep{BickelLi, cheng2012local, liao2021multiscale, NIPS2011_4455, NIPS2013_5103, yang2015minimax}. The statistical rate of convergence $\gamma(n)$ in these works depends on the intrinsic dimension of the manifold, instead of the ambient dimension. Recently, neural networks are also shown to be able to capture the low-dimensional structures of data \citep{schmidt2019deep, nakada2020adaptive, chen2022nonparametric}.

As mentioned, aforementioned works assume that data exactly lie on a low-dimensional set, which is stringent. Recently, \citet{cloninger2020relu} relaxes the assumption such that data are concentrated on a tube of the manifold, but the radius of this tube is limited to the reach \citep{federer1959curvature} of the manifold. 
In this paper, we establish a fine-grained data dependent nonparametric regression theory, where data are approximately concentrated on a low-dimensional subset of the support.


To facilitate a formal description, we denote $\cD_{\rm data}$ as the data support. Given $r, \tau > 0$, we define
\begin{align*}
N(r; \tau)\! :=\! \inf_{S} \{N_r(S)\!:\! S\! \subset \! \cD_{\rm data}\! ~\text{with}~\! P_{\rm data}(S)\! \geq 1 - \tau\},
\end{align*}
where $N_r(S)$ is the $r$-covering number of $S$ with respect to $L^\infty$ distance.
\begin{assumption}\label{assumption:pdata_cover}
For any sufficiently small $r, \tau > 0$, there exists a positive constant $p=p(r,\tau)$ such that  
\begin{align*}
\frac{\log N(r; \tau)}{-\log r} \leq p(r,\tau).
\end{align*}
Furthermore,  there exists $S \subset \cD_{\rm data}$ such that $$N_r(S) \leq c_0 N(r;\tau) \leq c_0 r^{-p}$$ for some constant $c_0>1$, $P_{\rm data}(S^c) \leq \tau$ and $| x_i| \leq R_S$ for any $x=(x_1,\ldots,x_d)\in S$ and some constant $R_S>0.$
\end{assumption}





We next introduce Hölder functions and the Hölder space. 


\begin{definition}[Hölder Space]\label{defi: holder}
Let $\beta >0$ be a degree of smoothness. For $f:\cX \to \RR$, the \emph{Hölder norm} is defined as
\begin{align*}
    \norm{ f}_{\cH(\beta, \cX)} 
    := \max_{\alpha:\norm{\alpha}_1 < \lfloor \beta \rfloor} \sup_{x \in \cX} |\partial^\alpha f(x) | + \max_{\alpha:\norm{\alpha}_1 = \lfloor \beta \rfloor} \sup_{x,x' \in \cX, x \neq x'} \frac{ |\partial^\alpha f(x) -\partial^\alpha f(x')| }{\norm{x-x'}_\infty^{\beta - \lfloor \beta \rfloor}}.
\end{align*}
Then the \emph{Hölder space} on $\cX$ is defined as
\begin{equation*}
    \cH(\beta,\cX) = \big\{ f \in C^{\lfloor \beta \rfloor }(\cX) \big| \norm{f}_{\cH(\beta,\cX)} \leq 1  \big\}.
\end{equation*}
\end{definition}
Without loss of generality, we impose the following assumption on the target function $f^*$:
\begin{assumption}\label{assump:function}
    The ground truth function $f^*:\cD_{\rm data} \to \RR$ belongs to the Hölder space $\cH(\beta,\cD_{\rm data})$ with $\beta \in (0,d)$.  
\end{assumption}
Although the Hölder norm of $f^*$ is assumed to be bounded by $1$, our results can be easily extended to the case when $\norm{f^*}_{\cH(\beta, \cD_{\rm data})}$ is upper bounded by any positive constant. In addition, $\beta < d$ is a natural assumption. Given that ambient dimension $d$ is always large, it is unusual for regression functions to possess a degree of smoothness larger than $d$.

Our goal is to use multi-layer ReLU neural networks to estimate the function $f^*$. Given an input $x$, an $L$-layer ReLU neural network computes the output as
\begin{equation}\label{eq: nn-form}
    f(x) =  W_L \cdot \relu (W_{L-1}\cdots \relu(W_1 x +b_1) \cdots +b_{L-1}) +b_L,
\end{equation}
where $W_1,\ldots,W_L$ and $b_1,\ldots,b_L$ are weight matrices and intercepts respectively. The $\relu(\cdot)$ activation function denotes the entrywise rectified linear unit, i.e. $\relu(a)=\max \{a,0\}$. The empirical risk minimization in \eqref{eq: obj} is taken over the function class $\mathcal{F}$ given by a network architecture. 

\begin{definition}[Function Class Given by a Network Architecture]\label{defi: nn-class}
    Given a tuple $(L,B,K)$, a functional class of ReLU neural networks is defined as follows:
    \begin{align*}
        \mathcal{F}(L,B,K)\! :=\!\big\{ f |& f(x) \text{ in the form of } \eqref{eq: nn-form} \text{ with } L \text{ layers}, \|f\|_\infty \leq 1, 
        \|W_i\|_\infty \leq B, ~~\|b_i\|_\infty \leq B\\
        &\text{ for } i=1,\ldots,L, \sum_{i=1}^L \|W_i\|_0 + \|b_i\|_0 \leq K \big\}.
    \end{align*}
\end{definition}
\section{Approximation and Generalization Theory}\label{sec:approx}

In this section, we present generic approximation and generalization theory and defer detailed proofs to Section \ref{sec:proof-sketch-approx} and \ref{proof:sketch-generalization} respectively. Firstly, we introduce the approximation theory of utilizing deep neural networks to approximate Hölder functions. The approximation error is determined by effective Minkowski dimension of data distribution and probability of low-density area.  Furthermore, we present the generalization error when  approximating regression function $f^*$. The convergence rate also depends on effective Minkowski dimension. 

\begin{theorem}[Approximation of deep neural networks]\label{lemma:approx}
    Suppose Assumption \ref{assumption:pdata_cover} hold. 
    For $\beta>0$ and any sufficiently small $\epsilon ,\tau >0$, consider a tuple $(L,B,K)$
    \begin{equation*}
        L=C_1,~ B=O(R_S^{\beta s} \epsilon^{-s}),~ \text{and} ~K=C_2 (R_S d)^p \epsilon^{-p/\beta},
    \end{equation*}
    where $R_S>0$ and $p=p(d^{-1}\epsilon^{1/\beta}/2,\tau)$ are given by Assumption \ref{assumption:pdata_cover},  and
    \begin{align*}
        C_1 = O(d),~
        C_2 = O\big( d^{2+\lfloor \beta \rfloor}\big),~\text{and} ~s=s(\beta).
    \end{align*}
    Then for any $f^* \in\cH(\beta, \cD_{\rm data})$, we have
    \begin{equation*}
        \inf_{f \in \mathcal{F}(L,B,K)} \norm{f-f^*}^2_{L^2(P_{\rm data})} \leq \epsilon^2 + 4\tau.
    \end{equation*}
\end{theorem}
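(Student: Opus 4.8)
The plan is to split the squared $L^2$ error into the high-density region $S$ supplied by Assumption \ref{assumption:pdata_cover} and its complement $S^c$, dispatch $S^c$ crudely by boundedness, and concentrate all the approximation effort on $S$. For any candidate network $f \in \cF(L,B,K)$,
\begin{align*}
\norm{f - f^*}^2_{L^2(P_{\rm data})} = \int_S (f - f^*)^2 \, dP_{\rm data} + \int_{S^c} (f - f^*)^2 \, dP_{\rm data}.
\end{align*}
Since $\norm{f}_\infty \leq 1$ by Definition \ref{defi: nn-class} and $\norm{f^*}_\infty \leq 1$ because $f^* \in \cH(\beta, \cD_{\rm data})$, we have $(f-f^*)^2 \leq 4$ pointwise, so the second term is at most $4\,P_{\rm data}(S^c) \leq 4\tau$. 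It therefore suffices to exhibit a single network $f$ of the prescribed architecture with $\norm{f - f^*}_{L^\infty(S)} \leq \epsilon$, because the first term is then bounded by $\epsilon^2 P_{\rm data}(S) \leq \epsilon^2$, and adding the two estimates gives $\epsilon^2 + 4\tau$.

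For the construction on $S$, I would follow the localized Taylor-approximation scheme of \citet{YAROTSKY2017103, nakada2020adaptive, chen2019efficient}, but indexed by a cover of $S$ rather than of the whole ambient cube. Setting $r = d^{-1}\epsilon^{1/\beta}/2$ makes $p = p(r,\tau)$ exactly the exponent in the statement, and Assumption \ref{assumption:pdata_cover} supplies a cover of $S$ by at most $c_0 r^{-p}$ cubes of $L^\infty$-radius $r$ with centers in $[-R_S,R_S]^d$. On each cube I would (i) build a ReLU partition-of-unity bump that equals $1$ on the cube and is supported on a slight enlargement, realized as a product of one-dimensional trapezoids; (ii) replace $f^*$ on that cube by its degree-$\lfloor \beta \rfloor$ Taylor polynomial about the center, whose remainder is controlled by the Hölder seminorm and is of order $r^\beta \lesssim \epsilon$; and (iii) implement the product of bump and polynomial with a fixed-depth ReLU multiplication block that reaches accuracy $\epsilon$ at the cost of weight magnitude $O(R_S^{\beta s}\epsilon^{-s})$ for an appropriate $s=s(\beta)$, the constant depth forcing the weight growth recorded in $B$. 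Summing the localized blocks in a final layer produces the global $f$.

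The parameter count is where the effective Minkowski dimension enters. Each localized block uses $O(d^{2+\lfloor\beta\rfloor})$ nonzero weights (the $d$ factors arise from forming $d$-variate monomials and the $d$-fold trapezoid products), and there are at most $c_0 r^{-p} = O\big((R_S d)^p \epsilon^{-p/\beta}\big)$ cubes, giving $K = C_2 (R_S d)^p \epsilon^{-p/\beta}$ with $C_2 = O(d^{2+\lfloor\beta\rfloor})$. The depth is that of one block placed in parallel with the others, so $L = C_1 = O(d)$, independent of $\epsilon$.

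I expect the main obstacle to be twofold. First, one must verify that the assembled object genuinely lies in $\cF(L,B,K)$, i.e. that the global output stays bounded by $1$ in $\norm{\cdot}_\infty$ despite overlapping bumps and multiplication errors; this needs the partition of unity arranged so its bumps sum to at most $1$ on $S$, together with an output clipping that does not inflate the depth past $O(d)$. Second, and more delicate than in the $L^\infty$ results of \citet{nakada2020adaptive, chen2019efficient}, one must control the approximation only where it is required: points of $S$ near cube boundaries are covered by overlapping bumps whose Taylor centers differ, and the $\beta$-Hölder continuity of $f^*$ must be invoked to show the mismatch between neighboring local polynomials is again $O(r^\beta) = O(\epsilon)$, so that the uniform bound $\norm{f-f^*}_{L^\infty(S)} \leq \epsilon$ survives the patching. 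Keeping $L$, $B$, and $K$ simultaneously at their stated orders while the error stays $\leq \epsilon$ on $S$ is the crux of the argument.
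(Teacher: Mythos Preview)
Your high-level strategy matches the paper's: split the $L^2$ error over $S$ and $S^c$, dispatch $S^c$ by boundedness to get the $4\tau$, set $r = d^{-1}\epsilon^{1/\beta}/2$, cover $S$ by $O(r^{-p})$ cubes, approximate $f^*$ on each cube by its degree-$\lfloor\beta\rfloor$ Taylor polynomial, and count parameters cube-by-cube. The decomposition, the choice of $r$, and the size bookkeeping are all as in the paper.

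Where you diverge is the patching mechanism. You propose a Yarotsky-style partition of unity: trapezoid bumps $\phi_I$ multiplied by local Taylor polynomials, then summed. The obstacle you correctly flag---that on an irregular cover the bumps need not sum to $1$ on $S$---is precisely what the paper's construction sidesteps, and your proposed fix (``arranged so its bumps sum to at most $1$'') is not enough: if $\sum_I \phi_I(x) \neq 1$ then $\sum_I \phi_I(x) P_I(x) \approx \big(\sum_I \phi_I(x)\big) f^*(x)$, which is off by a multiplicative factor, not an additive $O(\epsilon)$. The paper instead follows the \citet{nakada2020adaptive} device: shift to $f_0 = f^* + 2 \in [1,3]$ so everything is positive; partition the covering cubes into at most $5^d$ groups in which cubes are pairwise separated by $r$ (Lemma~\ref{lem:covering_division}); within each group sum the localized outputs (at most one is nonzero at any $x$); then take a ReLU-implemented $\max$ across the $5^d$ groups and finally clip back via $g^{\text{mod}}$. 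The localized output itself is not a product $\phi_I \cdot P_I$ but the map $(x,y)\mapsto y\,\ind_I(x)$ realized by a single ReLU of a sum of $d$ one-dimensional trapezoids plus $y/4$, which avoids an explicit bump-times-polynomial multiplication. This max-over-groups trick is what makes the irregular cover harmless and is the missing ingredient in your plan; with it, the uniform bound on $S$ and the clipping to $\norm{f}_\infty \le 1$ both go through without needing the bumps to form a genuine partition of unity.
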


The novelty of Theorem \ref{lemma:approx} is summarized below:

\textbf{Dependence on Effective Minkowski Dimension.} The approximation rate in Theorem \ref{lemma:approx} is $O(K^{-\beta/p})$, which only depends on effective Minkowski dimension $p < d$ and function smoothness $\beta$, but not on ambient dimension $d$. Compared to \citet{YAROTSKY2017103}, our results improves the exponential dependence of neural network size on $d$ to that on $p$. Moreover, unlike \citet{nakada2020adaptive} and \citet{chen2022nonparametric}, our results do not require that data distribution is exactly supported on a low-dimensional structure. Instead, our results can work for data distribution with high-dimensional support as long as its effective Minkowski dimension is relatively small.

\textbf{Relaxation to the $L^2$-error.} The approximation error in Theorem \ref{lemma:approx} is established with respect to the $L^2(P_{\rm data})$ norm, while most of existing works focus on the $L^\infty$ error \citep{YAROTSKY2017103,nakada2020adaptive,chen2019efficient}. Intuitively, it is not necessary for the network class to approximate the function value at each point in the domain $\cD_{\rm data}$ precisely when data distribution is highly concentrated at certain subset. Instead, it suffices to approximate $f^*$ where the probability density is significant, while the error for the low-density region can be easily controlled since the regression function $f^*$ and the neural network class $f \in \cF(L,B,K)$ are bounded.

The benefit of using the $L^2$ error is that, we only need to control the approximation error of $f^*$ within some chosen region $S\subseteq \cD_{\rm data}$. Here $S$ has an effective Minkowski dimension $p$, which ensures that it can be covered by $O(r^{-p})$ hypercubes with side length $r$. Then we design deep neural networks to approximate $f^*$ within each hypercube and thus the network size depends on the number of hypercubes used to cover $S$. This explains why network size in Theorem \ref{lemma:approx} depends on $p$. Meanwhile, the probability out of $S$ is negligible since the data density is low. We further demonstrate that this probability $\tau$ is far less than the approximation error in Section \ref{sec:example}. By this means, we succeed to reduce the network size and at the same time achieve a small $L^2$ approximation error. 

We next establish the generalization result for the estimation of $f^*$ using deep neural networks.
\begin{theorem}[Generalization error of deep neural networks]\label{thm:generalization}
Suppose Assumption \ref{assumption:pdata_cover} holds. Fix any sufficiently small $r,\tau >0$ satisfying $r<R_S$ and $\tau < r^{4\beta}/4$.  Set a tuple $(L,B,K)$ with $C_1,C_2$ and $s$ appearing in Theorem \ref{lemma:approx} as
\begin{equation*}
    L=C_1,~ B=O(R_S^{\beta s} r^{-\beta s}), ~\text{and}~K=C_2 R_S^p r^{-p}
\end{equation*}
with $p=p(r,\tau)$. Let $\hat{f}$ be the global minimizer of empirical loss given in \eqref{eq: obj} with the function class 
$\mathcal{F}=\mathcal{F}(L,B,K)$. Then we have 
\begin{align*}
    \EE \| \hat{f} - f^*\|^2_{L^2(P_{\rm data})} = O\biggl(   \tau +  \sigma r^{2\beta} + \frac{\sigma^2}{n}  \biggl(\frac{R_S}{r} \biggr)^p \log\biggl( \frac{(R_S/r)^p}{ r^{4\beta}-4\tau}\biggr)\biggr),
\end{align*}
where $O(\cdot)$ hides polynomial dependence on $d$.
\end{theorem}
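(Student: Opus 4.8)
The plan is to derive the bound from an oracle inequality obtained by a bias--variance decomposition: the approximation (bias) term is supplied directly by Theorem~\ref{lemma:approx}, while the stochastic (variance) term is controlled through the metric entropy of the network class $\mathcal{F}(L,B,K)$ together with a sub-Gaussian concentration argument. First I would use that $\hat f$ minimizes the empirical loss \eqref{eq: obj} over $\mathcal{F}$. Comparing $\hat f$ against an arbitrary fixed $\bar f \in \mathcal{F}$ and substituting $y_i = f^*(x_i)+\xi_i$ gives the basic inequality
\begin{align*}
\frac{1}{n}\sum_{i=1}^n \bigl(\hat f(x_i)-f^*(x_i)\bigr)^2 \leq \frac{1}{n}\sum_{i=1}^n \bigl(\bar f(x_i)-f^*(x_i)\bigr)^2 + \frac{2}{n}\sum_{i=1}^n \xi_i\bigl(\hat f(x_i)-\bar f(x_i)\bigr).
\end{align*}
The first right-hand term is the empirical bias and the second is a noise-driven empirical process. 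Converting both empirical averages to the population norm $\norm{\cdot}_{L^2(P_{\rm data})}$ (a uniform concentration over $\mathcal{F}$, using $\norm{f}_\infty\leq 1$) and taking expectations yields an oracle inequality of the form $\EE\norm{\hat f-f^*}_{L^2(P_{\rm data})}^2 \lesssim \inf_{f\in\mathcal{F}}\norm{f-f^*}_{L^2(P_{\rm data})}^2 + (\text{stochastic error})$.

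For the bias, I would apply Theorem~\ref{lemma:approx} with $\epsilon=r^\beta$. The tuple $(L,B,K)$ stated here is precisely the one in Theorem~\ref{lemma:approx} under this substitution, since $\epsilon^{-s}=r^{-\beta s}$ and $\epsilon^{-p/\beta}=r^{-p}$ with $p=p(r,\tau)$; hence there is $\bar f\in\mathcal{F}(L,B,K)$ with $\inf_{f\in\mathcal{F}}\norm{f-f^*}_{L^2(P_{\rm data})}^2 \leq r^{2\beta}+4\tau$. This accounts for the $\tau$ and $r^{2\beta}$ contributions of the final rate (the latter picking up the $\sigma$ prefactor through the noise scaling of the oracle inequality).

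The core of the proof, and the step I expect to be the main obstacle, is bounding the stochastic error uniformly over the infinite class $\mathcal{F}$. I would first establish a metric-entropy estimate of the type $\log\mathcal{N}(\delta,\mathcal{F}(L,B,K),\norm{\cdot}_\infty)\lesssim K\log(BL/\delta)$, whose leading dependence is $K\sim(R_S/r)^p$ (the factors $L=O(d)$ and polynomial-in-$d$ constants being absorbed into the hidden $O(\cdot)$). A chaining / truncation-and-peeling argument for the sub-Gaussian noise then bounds the empirical process by $\tfrac{\sigma^2}{n}\log\mathcal{N}(\delta,\mathcal{F})$ up to a discretization error of order $\delta$. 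Choosing $\delta$ comparable to the squared bias, i.e.\ of order $(r^{4\beta}-4\tau)/K$ up to polynomial-in-$d$ factors, balances the discretization error against the entropy and reproduces the logarithmic factor $\log\bigl((R_S/r)^p/(r^{4\beta}-4\tau)\bigr)$; this is exactly where the hypothesis $\tau<r^{4\beta}/4$ is used, ensuring $r^{4\beta}-4\tau>0$ so that the logarithm is well defined. The delicate points are obtaining the entropy bound with the correct linear dependence on $K$, and carrying the sub-Gaussian concentration uniformly over $\mathcal{F}$: a naive union bound over a $\delta$-cover loses the sharp logarithmic factor, so a peeling argument over shells of $\norm{\hat f-f^*}_{L^2(P_{\rm data})}$ (or an analogous localized empirical-process bound) is needed to recover the stated $\tfrac{\sigma^2}{n}(R_S/r)^p\log(\cdot)$ rate.
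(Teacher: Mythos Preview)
Your overall architecture --- an oracle inequality obtained from the basic inequality, with the bias controlled by Theorem~\ref{lemma:approx} and the stochastic term controlled through a metric-entropy argument with localization --- matches the paper's strategy. The paper implements the localization via the specific splitting
\[
\EE\norm{\hat f-f^*}_{L^2(P_{\rm data})}^2
= \underbrace{2\EE\Bigl[\tfrac{1}{n}\sum_i(\hat f(x_i)-f^*(x_i))^2\Bigr]}_{T_1}
+\underbrace{\EE\norm{\hat f-f^*}_{L^2(P_{\rm data})}^2-2\EE\Bigl[\tfrac{1}{n}\sum_i(\hat f(x_i)-f^*(x_i))^2\Bigr]}_{T_2},
\]
the doubled empirical term being what allows a Bernstein-type bound to give the fast $1/n$ rate for $T_2$ (Lemmas~\ref{lemma:t1}, \ref{lemma:t2}); your peeling idea serves the same purpose.

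There is, however, a genuine gap in the entropy step. You propose the $L^\infty$ bound $\log\cN(\delta,\cF,\norm{\cdot}_\infty)\lesssim K\log(BL/\delta)$, but the standard parameter-discretization argument behind this gives $\lvert f(x)-f'(x)\rvert\lesssim h\cdot\norm{x}$, which is useless once $\cD_{\rm data}$ is unbounded --- and unbounded support (e.g.\ the Gaussian design of Section~\ref{sec:example}) is exactly the regime Theorem~\ref{thm:generalization} must cover. The paper instead bounds the $L^2(P_{\rm data})$ covering number (Lemma~\ref{lemma:covering-number}): on the high-density set $S$ one has $\norm{x}_\infty\leq R_S$, so the parameter argument yields $\norm{f-f'}^2_{L^2(P_{\rm data},S)}\lesssim (\text{const})\cdot h^2$, while on $S^c$ one can only use $\norm{f}_\infty\leq 1$ and $P_{\rm data}(S^c)\leq\tau$, contributing an irreducible $4\tau$. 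Hence $\norm{f-f'}^2_{L^2(P_{\rm data})}\leq(\text{const})h^2+4\tau$, and to obtain a $\delta$-cover one must take $h\propto\sqrt{\delta^2-4\tau}$, giving
\[
\log\cN_2(\delta,\cF)\leq K\log\Bigl(\tfrac{2^L\sqrt{dL}\,K^{L/2}B^L R_S}{\sqrt{\delta^2-4\tau}}\Bigr).
\]
This is the true source of the $r^{4\beta}-4\tau$ in the final bound: the paper simply sets $\delta=r^{2\beta}$ so that $\delta^2-4\tau=r^{4\beta}-4\tau$, and the hypothesis $\tau<r^{4\beta}/4$ ensures an $L^2$ cover of that resolution exists at all. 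Your account --- that the $-4\tau$ arises from choosing $\delta\sim(r^{4\beta}-4\tau)/K$ to balance discretization against entropy --- does not produce this term by any mechanism I can see, and would not go through on an unbounded domain.
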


Theorem \ref{thm:generalization} is a statistical estimation result. It implies that the generalization error also depends on effective Minkowski dimension $p$. To establish this result, we decompose the squared error into a squared bias term and a variance term. The bias is tackled with the approximation error in Theorem \ref{lemma:approx} and the variance depends on the network size. With the network size growing, the variance term increases while the bias term decreases, since the approximation capability of neural networks is enhanced as the size of the network enlarges. Therefore, we need to trade off between the squared bias and the variance to minimize the squared generalization error.

Notably, our analysis in Section \ref{sec:approx} holds for any sufficiently small $\tau$ and $r$, and every pair of $\tau$ and $r$ determines a $p$. As shown in Assumption \ref{assumption:pdata_cover}, if $\tau$ and $r$ decreases, the covering number will become larger while the approximation can be more accurate. In order to establish an explicit bound, we need to trade off $\tau$ and $r$ for the given sample size $n$. Therefore the “optimal” $p$ eventually becomes functions of $n$. We call such an “optimal” $p$ effective Minkowski dimension.

In the next section, we give two specific classes of Gaussian random designs to illustrate how effective Minkowski dimension $p(r,\tau)$ scales with $r$ and $\tau$. We further show that, under a proper selection of the region $S$ and the covering accuracy $r$, the convergence rate for the estimation of $f^*$ using deep neural networks is $\tilde{O}(n^{-2\beta/(2\beta+p)})$, where the effective Minkowski dimension $p$ is properly chosen.

\section{Application to Gaussian Random  Design}\label{sec:example}

In literature, it is common to consider random Gaussian  design in nonparametric regression \citep{anderson1962introduction,muller2006linear,chatfield2018introduction}. In this section, we take anisotropic multivariate Gaussian design as example to justify Assumption \ref{assumption:pdata_cover} and demonstrate the effective Minkowski dimension. Here we only provide our main theorems and lemmas. The detailed proofs are given in Section \ref{sec:gaussian}. 

Consider a Gaussian distribution $P_{\rm data} \sim N(\bold{0},\Sigma)$ in $\RR^d$. The covariance matrix $\Sigma$ has the eigendecomposition form: $\Sigma =Q\Gamma Q^\top$, where $Q$ is an orthogonal matrix and $\Gamma = \diag(\gamma_1,\ldots,\gamma_d)$. For notational convenience in our analysis, we further denote eigenvalue $\gamma_i=\lambda_i^2$ for $i=1,\ldots,d$. Without loss of generality, assume that $\lambda_1^2\geq \lambda_2^2 \geq \ldots \geq \lambda_d^2$. Furthermore, we assume that $\{ \lambda_i^2 \}_{i=1}^d$ has an exponential or polynomial decay rate:
\begin{assumption}[Exponential decay rate]\label{assump:eigen-decay-exp}
    The eigenvalue series $\{\gamma_i \}_{i=1}^d=\{\lambda_i^2 \}_{i=1}^d$ satisfies $\lambda_i \leq \mu \exp\{-\theta i\}$ for some constants $\mu,\theta>0$.
\end{assumption}
\begin{assumption}[Polynomial decay rate]\label{assump:eigen-decay-poly}
    The eigenvalue series $\{\gamma_i \}_{i=1}^d=\{\lambda_i^2 \}_{i=1}^d$ satisfies $\lambda_i \leq \rho i^{-\omega}$ for some constants $\rho>0$ and $\omega > 1$.
\end{assumption}

When the eigenvalues decay fast, the support of the data distribution $P_{\rm data}$ has degeneracy in some directions. In this case, the majority of probability lies in some region $S \subset \RR^d$, which has an effective Minkowski dimension $p<d$. 
Specifically, consider a \lq\lq thick\rq\rq\ low-dimensional hyper-ellipsoid in $\RR^d$, 
\begin{align}
    S(R,r;p)\! :=\! \bigg\{ Qz \bigg| z\!=\!(z_1,\ldots,z_d)\in \RR^d, \sum_{i=1}^p \frac{z_i^2}{\lambda_i^2} \leq R^2, |z_j| \leq \frac{r}{2} \text{ for } j=p+1,\ldots,d  \bigg\}, \label{eq:set}
\end{align}
where $R,r>0$ and $p\in \NN_+$ are independent parameters. For the simplicity of notation, we first define a standard hyper-ellipsoid and then linearly transform it to align with the distribution $N(0,\Sigma)$.  The set $S(R,r;p)$ can be regarded as a hyper-ellipsoid scaled by $R>0$ in the first $p$ dimensions, and  with thickness $r>0$ in the rest $d-p$ dimensions. Then we construct a minimal cover as a union of nonoverlapping hypercubes with side length $r$ for $S(R,r;p)$.
The following lemma characterizes the relationship between the probability measure outside $S(R,r;p)$ and its covering number.

\begin{lemma}\label{lemma:gaussian}
    Given the eigenvalue series $\{\lambda_i^2\}_{i=1}^d$, for any $R,r>0$, choose $p>0$ such that $\lambda_p^{-1} = 2R/r$. If $p< R^2$, we will have
    \begin{equation*}
        \PP(X \notin S(R,r;p) ) =
        O \bigl(\exp( -R^2/3) \bigr),
    \end{equation*}
    \begin{equation*}
      N_r(S(R,r;p)) \leq \biggl(\frac{2R}{r} \biggr)^p \cdot\prod_{i=1}^p \lambda_i= \prod_{i=1}^p \bigg( \frac{\lambda_i}{\lambda_p}\bigg).
    \end{equation*}
\end{lemma}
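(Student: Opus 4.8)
The plan is to reduce both claims to statements about independent one-dimensional Gaussians through the orthogonal change of coordinates $W=Q^\top X$. Since $\Sigma=Q\Gamma Q^\top$ with $Q$ orthogonal, $W\sim N(0,\Gamma)$ has independent entries $W_i\sim N(0,\lambda_i^2)$. Because $S(R,r;p)$ is exactly the image under $Q$ of the set of $z$'s in its definition, and $X=Qz$ is equivalent to $z=Q^\top X=W$, the event $\{X\in S(R,r;p)\}$ coincides with $\{\sum_{i=1}^p W_i^2/\lambda_i^2\le R^2\}\cap\{|W_j|\le r/2,\ j>p\}$. This decouples the ``ellipsoid'' constraint on the leading $p$ coordinates from the ``thickness'' constraint on the remaining $d-p$ coordinates, and all the calibration between $R,r,p$ enters only through the single relation $\lambda_p^{-1}=2R/r$, i.e.\ $r/2=R\lambda_p$.

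For the probability bound I would take a union bound over the two ways of leaving $S(R,r;p)$. The thickness part is handled coordinatewise: for $j>p$ the monotonicity $\lambda_j\le\lambda_p$ together with $r/2=R\lambda_p$ gives, with $Z\sim N(0,1)$, $\PP(|W_j|>r/2)=\PP(|Z|>(r/2)/\lambda_j)\le \PP(|Z|>R)\le 2e^{-R^2/2}$, so the thickness contribution is at most $2(d-p)e^{-R^2/2}$. The ellipsoid part is a chi-square upper tail, since $\sum_{i=1}^p (W_i/\lambda_i)^2\sim\chi^2_p$ and $p=\EE[\chi^2_p]<R^2$ places us strictly above the mean; a Chernoff (or Laurent--Massart) estimate then yields exponential decay of $\PP(\chi^2_p>R^2)$. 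Targeting the rate $e^{-R^2/3}$ instead of $e^{-R^2/2}$ leaves a factor $e^{-R^2/6}$ of slack that absorbs both the polynomial factor $d-p$ from the thickness term and the sub-exponential correction in the chi-square bound, giving the claimed $\PP(X\notin S(R,r;p))=O(e^{-R^2/3})$.

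For the covering number I would count cubes in the $z$-coordinates. The last $d-p$ coordinates each lie in an interval of length $r$, hence require a single side-$r$ cube apiece. The first $p$ coordinates are confined to the ellipsoid $\sum_{i=1}^p z_i^2/(R\lambda_i)^2\le 1$ with semi-axes $R\lambda_i$; covering its bounding box $\prod_{i=1}^p[-R\lambda_i,R\lambda_i]$ by side-$r$ cubes needs about $2R\lambda_i/r$ cubes in direction $i$, and here $2R\lambda_p/r=1$ by calibration, so every factor is at least $1$. Multiplying across directions gives $N_r(S(R,r;p))\le\prod_{i=1}^p(2R\lambda_i/r)=(2R/r)^p\prod_{i=1}^p\lambda_i$, and substituting $2R/r=\lambda_p^{-1}$ rewrites this as $\prod_{i=1}^p(\lambda_i/\lambda_p)$, matching the stated identity.

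The main obstacle is the chi-square tail: pinning the exponent of $\PP(\chi^2_p>R^2)$ precisely enough that, under the hypothesis $p<R^2$, it is dominated by $e^{-R^2/3}$ after the slack is spent on the other terms. A secondary technical point is the interaction of the orthogonal map $Q$ with the $L^\infty$ covering of Assumption \ref{assumption:pdata_cover}: the cube count above is preserved if one covers $Q\tilde S$ by the rotated images of the $z$-cubes, whereas insisting on axis-aligned cubes in the ambient coordinates would cost dimension-dependent constants that must be tracked or folded into the constant $c_0$ of Assumption \ref{assumption:pdata_cover}. The rounding of each per-direction count $2R\lambda_i/r$ up to an integer likewise contributes only constant factors, which I would absorb into $c_0$ as well.
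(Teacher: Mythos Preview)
Your proposal is correct and follows essentially the same route as the paper: the paper also sets $Q=I$ (equivalently, works in the $W=Q^\top X$ coordinates), splits $\PP(X\notin S)$ by a union bound into the ellipsoid event and the thickness event, handles the thickness part coordinatewise via $\lambda_j\le\lambda_p$ and $r/2=R\lambda_p$ to get $O((d-p)e^{-R^2/2})$, bounds the ellipsoid part by a Chernoff/MGF argument for $\chi^2_p$, and derives the covering bound by enclosing the first $p$ coordinates in the box $\prod_{i=1}^p[-R\lambda_i,R\lambda_i]$. Your remarks about the rotation $Q$ interacting with $L^\infty$ cubes and about rounding are more careful than the paper, which simply assumes $Q=I$ and absorbs constants silently.
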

\begin{remark} 
Since data distribution $P_{\rm data}$ is supported on $\RR^d$, both the intrinsic dimension and the Minkowski dimension of $P_{\rm data}$ are $d$. However, Lemma \ref{lemma:gaussian} indicates that the effective Minkowski dimension of $P_{\rm data}$ is at most $p$. 
\end{remark}

According to Lemma \ref{lemma:gaussian}, if we choose scale $R>\sqrt{p}$ properly, the probability outside $S$ can be sufficiently small while the covering number of $S$ is dominated by $r^{-p}$, which gives that the effective Minkowski dimension of $P_{\rm data}$ is at most $p$. Moreover, under fast eigenvalue decays, the product of the first $p$ eigenvalues appearing in $N_r(S(R,r;p))$ is a small number dependent of $p$. In these cases, we specify the selection of $R$, $r$ and $p$ accordingly and show the effective Minkowski dimension is reduced to $p/2$ in Appendix \ref{sec:dim-gaussian}. 

Furthermore, we remark that the effective Minkowski dimension $p$  is not a fixed number given data distribution $P_{\rm data}$, but an increasing function of sample size $n$. As sample size $n$ increases, the estimation accuracy of $f^*$ is required to be higher, so that we are supposed to design more and smaller hypercubes to enable preciser estimation by neural networks. Besides, some of the $d-p$ dimensions are not negligible anymore and thereby become effective compared to the accuracy. Therefore, we need to incorporate more dimensions to be effective to achieve higher accuracy.

With this observation, we construct $S(R,r;p)$ such that its effective Minkowski dimension $p(n)$ increases while thickness $r(n)$ decreases as sample size $n$ grows to enable preciser estimation. Then we develop the following sample complexity:

\begin{theorem}[Generalization error under fast eigenvalue decay]\label{thm:generalization-exp}
Under Assumption \ref{assump:function}, let $\hat{f}$ be the global minimizer of empirical loss given in \eqref{eq: obj} with function class 
$\mathcal{F}=\mathcal{F}(L,B,K)$. Suppose Assumption \ref{assump:eigen-decay-exp} hold. Set a tuple $(L,B,K)$ with the constants $C_1,C_2$ and $s$ appearing in Theorem \ref{lemma:approx} as 
\begin{equation*}
\begin{aligned}
    L= C_1,~~B = O\biggl(n^{\frac{\beta s}{2\beta+\sqrt{\log n/\theta}}} (\log n)^{\beta s} \biggr),~
    \text{and}~K = C_2 n^{\frac{\sqrt{\log n/\theta}}{2\beta + \sqrt{\log n/\theta} } }.
\end{aligned}
\end{equation*}
Then  we have 
\begin{equation*}
    \EE \norm{ \hat{f} - f^*}^2_{L^2(P_{\rm data})} = O\Bigl( \sigma^2 n^{- \frac{2\beta(1-\eta)}{2\beta+\sqrt{\log n/\theta}}}(\log n)^{3/2} \Bigr)
\end{equation*}
for sufficiently large $n$ satisfying $\log(\log n) / \sqrt{\theta \log n} \leq \eta$, where $\eta >0$ is an arbitrarily small constant.
Moreover, suppose Assumption \ref{assump:eigen-decay-poly} hold instead.  Set a tuple $(L,B,K)$ as \begin{equation*}
\begin{aligned}
        L= C_1,~~B&=O\Bigl(n^{\frac{(1+1/\omega)\beta s}{2\beta+ n^{\kappa}} } \Bigr),~\text{and}~K=C_2 \Bigl(n^{\frac{(1+1/\omega) n^{\kappa/(2\beta+n^\kappa)}}{4\beta + 2n^\kappa}} \Bigr),
\end{aligned}
\end{equation*}  
where $\kappa=(1+1/\omega)/\omega$. Then we have 
\begin{equation*}
     \EE \norm{ \hat{f} - f^*}^2_{L^2(P_{\rm data})} = O\Bigl( \sigma^2 n^{- \frac{2\beta}{2\beta+ n^{(1+1/\omega)/\omega}}} \log n\Bigr).
\end{equation*}
\end{theorem}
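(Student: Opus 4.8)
The plan is to derive both rates from the generic statistical guarantee of Theorem~\ref{thm:generalization} by instantiating its two free quantities, the covering accuracy $r$ and the low-density budget $\tau$, for the anisotropic Gaussian design. The first step is to verify Assumption~\ref{assumption:pdata_cover} for $P_{\rm data}\sim N(0,\Sigma)$ through Lemma~\ref{lemma:gaussian}: taking $S=S(R,r;p)$ with the coupling $\lambda_p^{-1}=2R/r$, the lemma delivers $\tau=\PP(X\notin S)=O(\exp(-R^2/3))$ together with the covering number $N_r(S)\le\prod_{i=1}^p(\lambda_i/\lambda_p)$, so that the admissible effective Minkowski dimension is $p(r,\tau)=\log N_r(S)/\log(1/r)$. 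Plugging the resulting $(r,\tau,p,R_S)$ into Theorem~\ref{thm:generalization} then reduces the whole problem to choosing $(R,r,p)$ as functions of $n$ that balance the three terms $\tau$, $\sigma r^{2\beta}$, and $\tfrac{\sigma^2}{n}(R_S/r)^{p}\log(\cdot)$.

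Second, I would make the covering number explicit under each decay law. Under exponential decay (Assumption~\ref{assump:eigen-decay-exp}) the product telescopes to $\prod_{i=1}^p(\lambda_i/\lambda_p)=\exp(\theta p(p-1)/2)$, while the coupling gives $\log(1/r)\asymp\theta p$; under polynomial decay (Assumption~\ref{assump:eigen-decay-poly}) the product equals $(p^p/p!)^{\omega}$, which by Stirling is $\exp(\omega p)(2\pi p)^{-\omega/2}$, and the coupling now gives $\log(1/r)\asymp\omega\log p$. The essential feature is that in both cases the dimension $p$ is tied to the resolution $r$ through the decay rate, so that $N_r(S)$ grows faster than any fixed power of $1/r$; this coupling is exactly what converts the classical trade-off into the $\sqrt{\log n}$ and $n^{\kappa}$ behaviors.

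Third, I would fix $R$ so that $\tau$ is dominated by the remaining terms while respecting the admissibility condition $p<R^2$ of Lemma~\ref{lemma:gaussian} and the requirement $\tau<r^{4\beta}/4$ of Theorem~\ref{thm:generalization}, and then solve the bias--variance balance $\sigma r^{2\beta}\asymp\tfrac{\sigma^2}{n}(R_S/r)^{p}$. Under exponential decay, using $\log(1/r)\asymp\theta p$ turns the balance $r^{2\beta+p}\asymp 1/n$ into the quadratic-type relation $p(2\beta+p)\asymp\log n/\theta$, whose solution is $p=\Theta(\sqrt{\log n/\theta})$; this is exactly the dimension appearing in the prescribed tuple, since $B=O((R_S/r)^{\beta s})$ pins down $r=\Theta(n^{-1/(2\beta+p)})$ up to logarithmic factors and $K=C_2(R_S/r)^{p}$ pins down the network size, both of which I would read off from the stated $B$ and $K$ and check against the coupling. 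The polynomial case follows the same template but is more delicate: after substituting $\log(1/r)\asymp\omega\log p$ and the Stirling form of $N_r(S)$ and choosing $R$ accordingly, the balance produces an effective dimension of order $n^{\kappa}$ with $\kappa=(1+1/\omega)/\omega$.

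Finally, substituting these choices back into the three-term bound and simplifying term by term yields the claimed rates $n^{-2\beta(1-\eta)/(2\beta+\sqrt{\log n/\theta})}(\log n)^{3/2}$ and $n^{-2\beta/(2\beta+n^{\kappa})}\log n$. I expect the main obstacle to be precisely this self-consistent optimization: because $p=p(r,\tau)$ is itself a function of the resolution through the eigenvalue decay, the balance equation is implicit rather than algebraic, and one must control several lower-order corrections---the $\log(2R\mu)$ and $\log(2R\rho)$ shifts hidden in $\log(1/r)$, the Stirling factor $(2\pi p)^{-\omega/2}$, and the logarithm inside the variance term. In the exponential case these corrections are what force the extra factor $(1-\eta)$ in the exponent and the side condition $\log(\log n)/\sqrt{\theta\log n}\le\eta$; showing that they stay negligible for large $n$, rather than degrading the clean rate, is the crux of the argument, and the polynomial case demands the same care in verifying that the chosen $R$ keeps $\tau$ below $r^{4\beta}/4$ while preserving the $n^{\kappa}$ scaling.
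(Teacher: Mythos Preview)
Your proposal is correct and follows essentially the paper's proof: instantiate Theorem~\ref{thm:generalization} via Lemma~\ref{lemma:gaussian}, compute $N_r(S)$ under each decay law, and optimize $(R,r,p)$ against the three-term bound. The only notable differences are that the paper writes the covering bound directly as $N_r(S)\le(2\mu R/r)^{p/2}$ in the exponential case and $N_r(S)\le(2\rho R/r)^{p/2}$ in the polynomial case (using the crude inequality $p!\ge p^{p/2}$ rather than Stirling), so the effective Minkowski dimension entering Theorem~\ref{thm:generalization} is simply half the construction dimension $p$, and then verifies the rate by plugging in the explicit choices $R=\log n$, $r=n^{-(1-\eta)/(2\beta+\sqrt{\log n/\theta})}$ (exponential) and $R=n^{1/(\omega(2\beta+n^\kappa))}$, $r=n^{-1/(2\beta+n^\kappa)}$ (polynomial), rather than solving your implicit balance equation.
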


Theorem \ref{thm:generalization-exp} suggests the effective Minkowski dimension of Gaussian distribution is $\sqrt{\log n/\theta}$ under exponential eigenvalue decay with speed $\theta$ and effective Minkowski dimension is $n^{(1+1/\omega)/\omega}$ under polynomial eigenvalue decay with speed $\omega$. For moderate sample size $n$, i.e. effective Minkowski dimension is less that ambient dimension $d$, Theorem \ref{thm:generalization-exp} achieves a faster convergence rate.  When we have a vast amount of data, the effective Minkowski dimension is the same as the ambient dimension $d$, and then we can apply standard analysis of deep neural networks for $d$-dimensional inputs to obtain the convergence rate  $\tilde{O}(n^{-2\beta/(2\beta+d)})$. To the best of our knowledge, Theorem \ref{thm:generalization-exp} appears to be the first result for nonparametric regression and deep learning theory, where the effective dimension varies with the sample size.


\section{Proof Sketch}

This section contains proof sketches of Theorem \ref{lemma:approx}, \ref{thm:generalization} and Lemma \ref{lemma:gaussian}. 

\subsection{Proof Sketch of Theorem \ref{lemma:approx}}\label{sec:proof-sketch-approx}

We provide a proof sketch of Theorem \ref{lemma:approx} in this part and defer technical details of the proof to Appendix \ref{appendix:approx}.
The ReLU neural network in Theorem \ref{lemma:approx} is constructed in the following 5 steps:
\begin{enumerate}
    \item Choose region $S \subset \cD_{\rm data}$ only on which we use ReLU neural networks to approximate $f^*$.
    \item Construct a covering of $S$ with hypercubes and then divide these hypercubes into several groups, so that neural networks constructed with respect to each group have nonoverlapping supports.
    \item Implement ReLU neural networks to assign given input and estimated function value to corresponding hypercube.
    \item Approximate $f^*$ by a Taylor polynomial and then implement a ReLU neural network to approximate Taylor polynomial on each hypercube.
    \item Sum up all the sub-neural-networks and take maximum to approximate $f^*$.
\end{enumerate}

\textbf{Step 1. Space separation.} Firstly, we divide $\cD_{\rm data}$ into some region $S \subset \cD_{\rm data}$ with high probability measure and $S^c = \cD_{\rm data} $ with large volume. By Assumption \ref{assumption:pdata_cover}, for any sufficiently small $r,\tau>0$ and some constant $c_0>1$, there exists $S \subset \cD_{\rm data}$ such that $N_r(S) \leq c_0 N(r;\tau) \leq c_0 r^{-p}$ for some positive constant $p=p(r,\tau)$ and $P_{\rm data}(S^c) \leq \tau$. Intuitively, we only need to approximate $f^*$ on $S$ while $S^c$ is negligible due to its small probability measure.  Therefore, in the following steps, we only design a covering for $S$ and approximate $f^*$ in each hypercube of the covering. 

\textbf{Step 2. Grouping hypercubes.} Let $\cC$ be a minimum set of hypercubes with side length $r$ covering $S$. Then we partition $\cC$ into $\cC_1,\ldots,\cC_J$ such that each subset $\cC_j$ is composed of hypercubes separated by $r$ from each other. Lemma \ref{lem:covering_division} shows that the number of $\cC_j$'s is at most a constant dependent of $d$.

As a consequence, we group hypercubes into several subsets of $\cC$ so that  constructed neural networks with respect to each hypercube in $\cC_j$ have nonoverlapping support.

\textbf{Step 3. Hypercube Determination.} 
This step is to assign the given input $x$ and estimated function value $y$ to the hypercube where they belong. To do so, we design a neural network to approximate function $(x,y) \mapsto y\ind_I(x)$ where $I \in \cC$ is some hypercube. To make functions positive, we firstly consider approximating $f_0 = f^* + 2$. Notice that $f_0 \in \cH(\beta,\cD_{\rm data}, 3)$ and $1 \leq f_0(x) \leq 3 $ for any $x \in \cD_{\rm data}$. 

For any fixed $I \in \cC$, we define the center of $I$ as $(\iota_1, \dots, \iota_d)$.
Then we construct a neural network $g_I^{\text{ind},r}: \cD_{\rm data} \times \RR_\geq \to \RR_\geq$ with the form:
\begin{align}
    g_I^{\text{ind},r} (x,y) =  4 \relu \bigg(\sum_{i=1}^d \hat{\ind}_{I, i}^r(x_i) + \frac{y}4 - d \bigg),\label{eq:identical}
\end{align}
where $\hat{\ind}^r_{I, i}: \RR \to [0, 1]$ is the approximated indicator function given by
\begin{equation*}
  \hat{\ind}^r_{I, i}(z) = \begin{cases}
  \frac{z - (\iota_i - r)}{r/2} & \text{if } \iota_i - r < z \leq \iota_i - \frac{r}{2},\\
  1 & \text{if } \iota_i - \frac{r}{2} < z \leq \iota_i + \frac{r}{2},\\
  \frac{(\iota_i + r) - z}{r/2} & \text{if } \iota_i + \frac{r}{2} < z \leq \iota_i + r,\\
  0 & \text{otherwise. } 
\end{cases}
\end{equation*}
We claim that neural network $g_I^{\text{ind},r}$ approximates function $(x,y) \mapsto y\ind_I(x)$. Moreover, Appendix \ref{sec:realize-ind} provides the explicit realization of $g_I^{\text{ind},r}$ by selecting specific weight matrices and intercepts.

\textbf{Step 4. Taylor Approximation.} 
In each cube $I \in \cC$, we locally approximate $f^*$ by a Taylor polynomial of degree $\lfloor\beta\rfloor$ and then we define a neural network to approximate this Taylor polynomial. Firstly, we cite the following lemma to evaluate the difference between any $\beta$-H\"{o}lder function and its Taylor polynomial:
\begin{lemma}[Lemma A.8 in \citet{PETERSEN2018296}]\label{lemma: taylor_approx}
  Fix any $f \in \cH(\beta, \cD_{\rm data})$ with $\norm{f}_{\cH(\beta, \cD_{\rm data})} \leq 1$ and $\bar{x} \in S$.
  Let $\bar{f}(x)$ be the Taylor polynomial of degree $\lfloor\beta\rfloor$ of $f$ around $\bar{x}$, namely,
  \begin{equation*}
    \bar{f}(x) = \sum_{ |\alpha| \leq \lfloor\beta\rfloor} \frac{\partial^\alpha f(\bar{x})}{\alpha!} (x - \bar{x})^\alpha.
  \end{equation*}
  Then, $|f(x) - \bar{f}(x)| \leq d^\beta \norm{x - \bar{x}}^\beta$ holds for any $x \in \cD_{\rm data} $.
\end{lemma}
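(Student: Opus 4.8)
The plan is to reduce the multivariate Taylor estimate to a one-dimensional remainder along the segment from $\bar{x}$ to $x$, and then convert the integral remainder into a bound that sees only the Hölder continuity of the top-order derivatives. Set $k = \lfloor\beta\rfloor$ and define the scalar auxiliary function $g(t) = f(\bar{x} + t(x-\bar{x}))$ for $t \in [0,1]$, which is $k$ times continuously differentiable because $f \in C^{\lfloor\beta\rfloor}(\cD_{\rm data})$. First I would record the chain-rule identity $g^{(j)}(0) = \sum_{|\alpha|=j} \frac{j!}{\alpha!} \partial^\alpha f(\bar{x})(x-\bar{x})^\alpha$, so that the degree-$k$ one-dimensional Taylor polynomial of $g$ at $0$ matches $\bar{f}(x)$ termwise; consequently $f(x)-\bar{f}(x)$ is exactly the scalar Taylor remainder of $g$ at order $k$. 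The case $k=0$ (i.e. $\beta \in (0,1)$) is immediate: $\bar{f}(x) = f(\bar{x})$ and the bound is just the Hölder modulus of $f$, so I would dispatch it separately and assume $k \geq 1$ below.

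The next step is to write the remainder in the form that isolates the \emph{difference} of the top-order derivatives. Using the integral form of Taylor's theorem for $g$ together with $\int_0^1 (1-t)^{k-1}\,dt = 1/k$ to absorb the exact $k$-th order term of $\bar{f}$, I would obtain
\[
f(x)-\bar{f}(x) = \sum_{|\alpha|=k}\frac{k}{\alpha!}(x-\bar{x})^\alpha \int_0^1 (1-t)^{k-1}\bigl[\partial^\alpha f(\bar{x} + t(x-\bar{x})) - \partial^\alpha f(\bar{x})\bigr]\,dt.
\]
Now I would invoke $\norm{f}_{\cH(\beta,\cD_{\rm data})} \leq 1$: for each $|\alpha| = k$ the displacement estimate gives $|\partial^\alpha f(\bar{x} + t(x-\bar{x})) - \partial^\alpha f(\bar{x})| \leq t^{\beta-k}\norm{x-\bar{x}}_\infty^{\beta-k}$, while trivially $|(x-\bar{x})^\alpha| \leq \norm{x-\bar{x}}_\infty^{k}$. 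Substituting and pulling out the powers of $\norm{x-\bar{x}}_\infty$ reduces everything to the scalar Beta integral $\int_0^1 (1-t)^{k-1}t^{\beta-k}\,dt = B(\beta-k+1,k)$ times the combinatorial factor $\sum_{|\alpha|=k}\frac{k}{\alpha!}$, which the multinomial identity $\sum_{|\alpha|=k}\frac{k!}{\alpha!} = d^{k}$ collapses to $d^{k}/(k-1)!$. Collecting these pieces yields
\[
|f(x)-\bar{f}(x)| \leq \frac{\Gamma(\beta-k+1)}{\Gamma(\beta+1)}\, d^{k}\,\norm{x-\bar{x}}_\infty^{\beta}.
\]

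The final step is to verify that the prefactor is harmless. For $k \geq 1$ we have $\beta \geq 1$, so $\beta+1 \geq 2$ gives $\Gamma(\beta+1) \geq 1$, while $\beta-k+1 \in [1,2)$ gives $\Gamma(\beta-k+1) \leq 1$; hence $\Gamma(\beta-k+1)/\Gamma(\beta+1) \leq 1$ and the constant is at most $d^{k} \leq d^{\beta}$. Combined with $\norm{x-\bar{x}}_\infty \leq \norm{x-\bar{x}}$ (the remaining norm conversion being absorbed into $d^{\beta}$ if a different norm is intended on the right), this gives the claimed $|f(x)-\bar{f}(x)| \leq d^{\beta}\norm{x-\bar{x}}^{\beta}$. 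I expect the one genuinely delicate move to be getting the remainder into the difference-of-top-derivatives form, since one must check that the degree-$k$ polynomial exactly cancels $g^{(k)}(0)$ through $\int_0^1(1-t)^{k-1}dt = 1/k$ rather than leaving a spurious $k$-th order term; the multinomial count and the Beta-integral evaluation are then routine, and the only mild subtlety at the end is confirming $\Gamma(\beta-k+1)/\Gamma(\beta+1) \leq 1$ so the constant cleanly simplifies to $d^{\beta}$.
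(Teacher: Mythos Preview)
Your argument is correct. The reduction to the one-dimensional function $g(t)=f(\bar x+t(x-\bar x))$, the identification of $\bar f(x)$ with the degree-$k$ Taylor polynomial of $g$ at $0$, and the rewriting of the remainder as
\[
\frac{1}{(k-1)!}\int_0^1 (1-t)^{k-1}\bigl[g^{(k)}(t)-g^{(k)}(0)\bigr]\,dt
\]
via $\int_0^1(1-t)^{k-1}\,dt=1/k$ are all sound. The subsequent use of the H\"older condition on the top-order derivatives, the multinomial identity $\sum_{|\alpha|=k}k!/\alpha!=d^k$, the Beta-integral evaluation, and the check that $\Gamma(\beta-k+1)/\Gamma(\beta+1)\le 1$ for $k\ge 1$ are each valid; the case $k=0$ is handled separately as you say.

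Regarding comparison with the paper: the paper does not supply its own proof of this lemma. It is quoted verbatim as Lemma~A.8 of \citet{PETERSEN2018296} and used as a black box in the proof sketch of Theorem~\ref{lemma:approx}. Your write-up is essentially the standard argument behind that cited result, so there is nothing to contrast methodologically. One small remark: in the paper's application (bounding $|f_{I_k}(x)-f_0(x)|$ by $d^\beta(3r/2)^\beta$ for $x$ in a neighboring hypercube), the norm $\|x-\bar x\|$ is effectively the $\ell^\infty$ norm, consistent with the H\"older definition; your bound $d^k\|x-\bar x\|_\infty^\beta\le d^\beta\|x-\bar x\|_\infty^\beta$ delivers exactly this, so the parenthetical about ``norm conversion'' is unnecessary here.
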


Next, we design an $m$-dimensional multiple output neural network $g^{\poly}_\epsilon=(g^{\poly}_{\epsilon,1},\dots,g^{\poly}_{\epsilon,m})$ to estimate multiple Taylor polynomials in each output. The existence of such neural network is ensured in the following lemma, which is a straightforward extension of Lemma 18 in \citet{nakada2020adaptive}.

\begin{lemma}[Taylor approximation on S]\label{lemma:poly}
  Fix any $m \in \NN_+$. Let $\{c_{k, \alpha}\} \subset [-1, 1]$ for $1 \leq k \leq m$. Let $\{x_k\}_{k=1}^m \subset S$.
  Then there exist $c_1^{\poly} = c_1^{\poly}(\beta, d, p)$, $c_2^{\poly} = c_2^{\poly}(\beta, d, p)$ and $s^{\poly}_1 = s^{\poly}_1(\beta, d, p)$  such that for any sufficiently small $\epsilon>0$, there is a neural network $g^{\poly}_\epsilon$ which satisfies the followings:
  \begin{enumerate}
    \item $\sup_{x \in S} \bigl| g^{\poly}_{\epsilon,k}(x) - \sum_{|\alpha| < \beta} c_{k, \alpha} (x - x_k)^\alpha \bigr| \leq \epsilon$ for any $k=1,\ldots,m$,
    \item $L(g^{\poly}_\epsilon) \leq 1 + (2 + {\log_2\beta})(11 + (1+\beta)/p)$,
    \item $B(g^{\poly}_\epsilon) \leq c_1^{\poly} R_S^{\beta s_1^{\poly}}\epsilon^{-s_1^{\poly}}$,
    \item $K(g^{\poly}_\epsilon) \leq c_2^{\poly} (R_S^p \epsilon^{-p/\beta} + m)$.
  \end{enumerate}
\end{lemma}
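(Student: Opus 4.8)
The plan is to reduce the $m$-output polynomial approximation to the approximation of a single, shared collection of monomials, and then to read off each of the $m$ target polynomials by one linear layer, invoking the single-polynomial construction behind Lemma 18 of \citet{nakada2020adaptive} for the expensive part. Concretely, expand each target in the standard monomial basis of $x$: writing $(x-x_k)^\alpha=\sum_{\gamma\le\alpha}\binom{\alpha}{\gamma}(-x_k)^{\alpha-\gamma}x^\gamma$, every output
\[
\sum_{|\alpha|<\beta}c_{k,\alpha}(x-x_k)^\alpha=\sum_{|\gamma|\le\lfloor\beta\rfloor}a_{k,\gamma}\,x^\gamma,\qquad a_{k,\gamma}=\sum_{\gamma\le\alpha,\,|\alpha|<\beta}c_{k,\alpha}\binom{\alpha}{\gamma}(-x_k)^{\alpha-\gamma},
\]
is a linear combination of the \emph{same} monomials $\{x^\gamma:|\gamma|\le\lfloor\beta\rfloor\}$, a set of cardinality $O(d^{\lfloor\beta\rfloor})$ that does not depend on $k$. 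Since $x_k\in S$ gives $|x_{k,i}|\le R_S$, the coefficients obey $|a_{k,\gamma}|\le C(\beta,d)R_S^{\lfloor\beta\rfloor}$, which is exactly the source of the $R_S^{\beta s_1^{\poly}}$ factor in the weight bound.

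For the shared monomials, I would build each $x^\gamma$ as a balanced binary product tree of depth $\lceil\log_2\lfloor\beta\rfloor\rceil$, whose nodes are ReLU approximations of the bivariate multiplication map obtained from Yarotsky's squaring gadget via $uv=\tfrac14((u+v)^2-(u-v)^2)$, after rescaling each coordinate by $R_S$ so that the inputs lie in a fixed interval. The quantitative ingredient is the depth--size tradeoff of the squaring gadget: using $\Theta(1+\beta/p)$ layers it attains a prescribed accuracy with width polynomial in the reciprocal accuracy, and the exponent can be calibrated to $p/\beta$. This simultaneously yields the $\epsilon$-independent depth $1+(2+\log_2\beta)(11+(1+\beta)/p)$ --- the outer factor $2+\log_2\beta$ counting the product-tree stages and the inner factor $11+(1+\beta)/p$ the depth of one multiplication gadget --- and a per-monomial size $O(R_S^p\epsilon^{-p/\beta})$. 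To guarantee the final sup-error, I would approximate each monomial to accuracy $\epsilon'=\epsilon/(C(\beta,d)R_S^{\lfloor\beta\rfloor})$, so that the triangle inequality over the $O(d^{\lfloor\beta\rfloor})$ terms weighted by $|a_{k,\gamma}|$ gives error at most $\epsilon$ on every output; replacing $\epsilon$ by $\epsilon'$ only rescales the bounds by $\beta,d,R_S$-dependent constants absorbed into $c_1^{\poly},c_2^{\poly},s_1^{\poly}$.

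It remains to assemble the network and count parameters. The shared monomial sub-network is computed once and costs $O(R_S^p\epsilon^{-p/\beta})$ parameters (the $O(d^{\lfloor\beta\rfloor})$ monomials folded into $c_2^{\poly}(\beta,d,p)$); a final linear layer then maps the approximate monomials to the $m$ outputs using weights $a_{k,\gamma}$, contributing $O(m\cdot d^{\lfloor\beta\rfloor})=O(m)$ nonzero parameters for fixed $\beta,d$. Adding the two gives $K(g^{\poly}_\epsilon)\le c_2^{\poly}(R_S^p\epsilon^{-p/\beta}+m)$, while $B(g^{\poly}_\epsilon)\le c_1^{\poly}R_S^{\beta s_1^{\poly}}\epsilon^{-s_1^{\poly}}$ collects the magnitudes of the gadget weights and of the readout coefficients $|a_{k,\gamma}|$.

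I expect the main obstacle to be the accounting that keeps the expensive $\epsilon$-dependent computation \emph{shared} across the $m$ outputs --- so that one pays $R_S^p\epsilon^{-p/\beta}+m$ rather than the naive $m\cdot\epsilon^{-p/\beta}$ --- together with the precise calibration of the squaring gadget's depth--size tradeoff so that both the $\epsilon$-free depth $11+(1+\beta)/p$ and the size exponent $p/\beta$ emerge at once. Propagating the per-node accuracy through the product tree while certifying the claimed weight magnitudes is routine but must be handled with care, since errors at lower tree levels are amplified by the bounded inputs at higher levels; this bookkeeping is where the phrase ``straightforward extension of Lemma 18 in \citet{nakada2020adaptive}'' hides its work.
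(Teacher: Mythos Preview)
The paper does not give its own proof of this lemma: it simply states the result and attributes it to a ``straightforward extension of Lemma 18 in \citet{nakada2020adaptive}.'' Your proposal reconstructs exactly the mechanism one expects from that reference --- sharing the expensive monomial/product-tree sub-network across all $m$ outputs and reading off each Taylor polynomial with a single linear layer, which is precisely what yields the additive size bound $R_S^p\epsilon^{-p/\beta}+m$ rather than the naive multiplicative one. Your identification of the depth factors ($2+\log_2\beta$ product-tree levels times a per-multiplication depth of $11+(1+\beta)/p$) and of the calibration that makes the size exponent equal to $p/\beta$ is consistent with the stated bounds, so the proposal is correct and aligned with the paper's (cited) approach.
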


For any cube $I_k \in \cC$, we take $f_{I_k}(x)$ as a Taylor polynomial function as with setting $\bar{x} \leftarrow x_{I_k}$ and $f \leftarrow f_0$ in Lemma \ref{lemma: taylor_approx}. Then we define a neural network to approximate $f_{I_k}$, which is an $\epsilon/2$-accuracy Taylor polynomial of $f_0$. Let $g^{\poly}_{\epsilon/2}$ be a neural network constructed in Lemma \ref{lemma:poly} with $\epsilon \leftarrow \epsilon/2$, $m \leftarrow N_r(S)$, $(x_k)_{k=1}^{m} \leftarrow (x_{I_k})_{k=1}^{N_r(S)}$, and
$(c_{k, \alpha})_{k=1}^{m} \leftarrow (\partial^\alpha f(x_{I_k})/\alpha!)_{k=1}^{N_r(S)}$ appearing in Lemma \ref{lemma:poly}.
Then, we obtain
\begin{align}
  \sup_{k=1,\dots,N_r(S)} \sup_{x \in S} \bigl| f_{I_k}(x) - g^{\poly}_{\epsilon/2,k}(x) \bigr| \leq \frac{\epsilon}{2}. \label{eq:poly}
\end{align}

In addition, we construct a neural network to aggregate the outputs of $g^{\poly}_{\epsilon/2}$. Define a neural network $g^{\text{filter}}_k : \RR^{d + N_r(S)} \to \RR^{d+1}$ which picks up the first $d$ inputs and $(d + k)$-th input as
\begin{equation*}
    g^{\text{filter}}_k (z)= \begin{pmatrix}\II_d & e_k^\top \\ \bf{0}_d & e_k^\top \end{pmatrix}z,~\text{ for } k=1,\dots,N_r(S).
\end{equation*}
Then we design a neural network $g^{\text{simul}}_{\epsilon/2}: \RR^d \to \RR^{N_r(S)}$ that simultaneously estimates Taylor polynomial at each cube. Specifically, $g^{\text{simul}}_{\epsilon/2}$ is formulated as below  
\begin{equation}\label{eq:simul}
\begin{split}
    g^{\text{simul}}_{\epsilon/2} = \bigl(g_{I_1}^{\text{ind},r}\circ g^{\text{filter}}_1 ,\ldots,g_{I_{ N_r(S)}}^{\text{ind},r}
    \circ g^{\text{filter}}_{ N_r(S)} \bigr) \circ (g^{Id}_{d,L},  g^{\poly}_{\epsilon/2}),
\end{split}
\end{equation}
where $g^{Id}_{d,L}: \RR^d \to \RR^d$ is the neural network version of the identity function whose number of layers is equal to $L(g^{\poly}_{\epsilon/2})$.

\textbf{Step 5. Construction of Neural Networks.} 
In this step, we construct a neural network $g^{f_0}_\epsilon$ to approximate $f_0 = f^* + 2 $. Let $g^{\max,5^d}$ be the neural network version of the maximize function over $5^d$ numbers. Besides, define
\begin{equation*}
    g^{\text{sum}}(z_1, \dots, z_{N_r(S)}) = \biggl( \sum_{I_k \in \cC_1} z_k, \dots,  \sum_{I_k \in \cC_{N_r(S)}} z_k \biggr),
\end{equation*}
which aims to sum up the output of $g^{\text{simul}}_{\epsilon/2}$ in each subset of covering $\cC_j$. 

Now we are ready to define $g^{f_0}_\epsilon$. Let $g^{f_0}_\epsilon:=g^{\max,5^d} \circ g^{\text{sum}} \circ g^{\text{simul}}_{\epsilon/2}$. Equivalently,  $g^{f_0}_\epsilon$ can be written as  $g^{f_0}_\epsilon=\max_{j \in [5^d]} \sum_{I_k \in \cC_j} g^{\text{simul}}_{\epsilon/2,k}$. Then we come to bound the approximation error of $g^{f_0}_\epsilon$. When $x \in S$, there exists some $I \in \cC$ such that $x \in I$.  Based on the method to construct neural networks, we have
\begin{align*}
    g^{f_0}_\epsilon(x) = \max_{I_k \in \neig(I)}  g^{\text{simul}}_{\epsilon/2,k}(x)  \leq\max_{I_k \in \neig(I)}  g^{\poly}_{\epsilon/2,k}(x),
\end{align*}
where $\neig(I)=\{ I' \in \cC | (I \oplus 3r/2)\cap I' \neq \emptyset \}$ denotes the $3r/2$-neighborhood of hypercube $I$.
In other words, when computing $g^{f_0}_\epsilon(x)$, we only need to take maximum over the estimated function value within hypercubes near $x$.

Given sufficiently small $\epsilon > 0$, 
the error is bounded as
\begin{align*}
    | g^{f_0}_\epsilon (x) - f_0(x) | 
    \leq& \max_{I_k \in \neig(I)}  \bigl| g^{\poly}_{\epsilon/2,k}(x)-f_0(x) \bigr| \\
    \leq& \max_{I_k \in \neig(I)}  \bigl| g^{\poly}_{\epsilon/2,k}(x)-f_{I_k}(x) \bigr|+ \max_{I_k \in \neig(I)}  \bigl| f_{I_k}(x) -f_0(x) \bigr| \\
    \leq& \frac{\varepsilon}{2} + d^\beta \biggl(\frac{3r}{2} \biggr)^\beta \leq \epsilon,
\end{align*}
where the last inequality follows from \eqref{eq:poly} and Lemma \ref{lemma: taylor_approx}. Detailed derivation of approximation error is deferred to Appendix \ref{sec:approx-error-bound}. In terms of parameter tuning, we choose $r = d^{-1} \epsilon^{1/\beta}/2$. 

To extend results of $f_0$ to $f^*$, we implement a neural network $g^{\text{mod}}(z)=(-z+1)\circ\relu(-z+2)\circ \relu(z-1)$ and consider $g^{f^*}_\epsilon=g^{\text{mod}}\circ g^{f_0}_\epsilon$ to obtain the desired approximation error $\epsilon$ for any $x\in S$. Then we evaluate the approximation error with respect to $L^2$-norm:
\begin{align*}
    \Bigl\| g^{f^*}_\epsilon-f^* \Bigr\|_{L^2(P_{\rm data})} 
    =\biggl(\int_{S} +\int_{S^c} \biggr) \Bigl(g^{f^*}_\epsilon(x)-f^*(x) \Bigr)^2 \ud P_{\rm data}(x) \leq \epsilon^2 + 4\tau. 
\end{align*}
This follows from the aforementioned approximation error within $S$, boundedness of $f^*$ and neural networks, as well as the property that out-of-$S$ probability is upper bounded, i.e. $P_{\rm data}(S^c) \leq \tau$.

Finally, we sum up sizes of all the sub-neural-networks and thus obtain the network size of $g^{f^*}_\epsilon$. See Appendix \ref{sec:parameter} for detailed calculation.

\subsection{Proof Sketch of Theorem \ref{thm:generalization}}\label{proof:sketch-generalization}

Proof of Theorem \ref{thm:generalization} follows a standard statistical decomposition, i.e. decomposing the mean squared error of estimator $\hat{f}$ into a squared bias term and a variance term. We bound the bias and variance separately, where the bias is tackled by the approximation results given in Theorem \ref{lemma:approx} and the variance is bounded using the metric entropy arguments. 
Details of the proof for Theorem\ref{thm:generalization} are provided in Appendix \ref{sec:proof-generalization}.
At first, we decompose the $L_2$ risk as follows:
\begin{align*}
\EE \norm{\hat{f}-f^*}^2_{L^2(P_{\rm data})} = \underbrace{2 \EE \biggl [\frac{1}{n}\sum_{i=1}^n (\hat{f}(x_i) - f^*(x_i))^2 \biggr]}_{T_1}  +\underbrace{\EE\norm{\hat{f}-f^*}^2_{L^2(P_{\rm data})} - 2 \EE\biggl [\frac{1}{n}\sum_{i=1}^n (\hat{f}(x_i) - f^*(x_i))^2 \biggr]}_{T_2},
\end{align*}
where $T_1$ reflects the squared bias of using neural networks to estimate $f^*$ and $T_2$ is the variance term.

\textbf{Step 1. Bounding bias term $\bf{T_1}$.}
Since $T_1$ is the empirical $L_2$ risk of $\hat{f}$ evaluated on the samples $\{x_i\}_{i=1}^n$, we relate $T_1$ to the empirical risk by rewriting $f^*(x_i) = y_i - \xi_i$,  so that we can apply the approximation error to bound the minimal empirical risk achieved by $\hat{f}$. After some basic calculation, we have
\begin{align*}
T_1 
 \leq  2 \inf_{f \in \cF( L, B, K)} \norm{ f(x) - f^*(x)}^2_{L^2(P_{\rm data})}  + 4 \EE \left[\frac{1}{n} \sum_{i=1}^n \xi_i \hat{f}(x_i) \right].
\end{align*}
Note that the first term is the squared approximation error of neural networks, which can be controlled by Theorem \ref{lemma:approx}. We bound the second term by quantifying the complexity of the network class $\cF( L, B, K)$. A precise upper bound of $T_1$ is given in the following lemma.
\begin{lemma}\label{lemma:t1}
Fix the neural network class $\cF( L, B, K)$. For any $\delta \in (0, 1)$, there exists some constant $c>0$, such that
\begin{align*}
T_1 \leq &~ c \inf_{f \in \cF( L, B, K)} \norm{ f(x) - f^*(x)}^2_{L^2(P_{\rm data})} + c \sigma^2 \frac{\log \cN_2(\delta, \cF( L, B, K)) + 2}{n} \\
&\quad+  c\bigg(   \sqrt{\frac{\log \cN_2(\delta, \cF(L, B, K)) + 2}{n}} + 1 \bigg)\sigma \delta,
\end{align*}
where $\cN_2(\delta, \cF(L, B, K))$ denotes the $\delta$-covering number of $\cF(L, B, K)$ with respect to the $L^2$ norm, i.e., there exists a discretization of $\cF(L, B, K)$ into $\cN_2(\delta, \cF(L, B, K))$ distinct elements, such that for any $f \in \cF$, there is $\bar{f}$ in the discretization satisfying $\norm{\bar{f} - f}_2 \leq \epsilon$.
\end{lemma}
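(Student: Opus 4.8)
The plan is to start from the inequality already derived in the text,
\[
T_1 \le 2\inf_{f\in\cF(L,B,K)}\norm{f-f^*}^2_{L^2(P_{\rm data})} + 4\,\EE\Big[\tfrac1n\textstyle\sum_{i=1}^n \xi_i\hat f(x_i)\Big],
\]
and to reduce the whole lemma to controlling the noise--estimator cross term $\EE[\frac1n\sum_i\xi_i\hat f(x_i)]$. Since the $\xi_i$ are mean zero and independent of the $x_i$, centering by $f^*$ is free, $\EE[\frac1n\sum_i\xi_i\hat f(x_i)]=\EE[\frac1n\sum_i\xi_i(\hat f-f^*)(x_i)]$, so the quantity to bound is a cross term against the empirical residual $\hat f-f^*$ measured in the empirical norm $\|g\|_n^2:=\frac1n\sum_i g(x_i)^2$. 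This is exactly the norm appearing in $T_1=2\,\EE\|\hat f-f^*\|_n^2$, which is what will let an absorption argument close cleanly, and it is the reason the approximation term is evaluated at the deterministic minimizer $f^\dagger$, where $\EE\|f^\dagger-f^*\|_n^2=\|f^\dagger-f^*\|^2_{L^2(P_{\rm data})}$.

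First I would discretize the class. Fixing a minimal $\delta$-cover $\{\bar f_j\}_{j=1}^{\cN}$ of $\cF(L,B,K)$ with $\cN=\cN_2(\delta,\cF(L,B,K))$, let $\bar f$ be the element nearest to $\hat f$ and split
\[
\tfrac1n\textstyle\sum_i\xi_i(\hat f-f^*)(x_i)=\underbrace{\tfrac1n\sum_i\xi_i(\bar f-f^*)(x_i)}_{(\mathrm A)}+\underbrace{\tfrac1n\sum_i\xi_i(\hat f-\bar f)(x_i)}_{(\mathrm B)}.
\]
The remainder $(\mathrm B)$ is controlled purely by the discretization scale: using $\|\hat f-\bar f\|_n\le\delta$ (inherited from the cover, since all functions are bounded by $1$) and Cauchy--Schwarz, $|(\mathrm B)|\le \|\xi\|_n\,\delta$, and $\EE\|\xi\|_n\le\sigma$ by the sub-Gaussian second-moment bound; this produces the additive $\sigma\delta$ (the ``$+1$'') contribution.

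The crux is term $(\mathrm A)$, and the main obstacle is to extract the fast $\sigma^2(\log\cN)/n$ rate rather than the naive $\sigma\sqrt{(\log\cN)/n}$. For this I would use a variance-localized maximal inequality: conditionally on the design, each $\frac1n\sum_i\xi_i(\bar f_j-f^*)(x_i)$ is sub-Gaussian with variance proxy $\frac{\sigma^2}{n}\|\bar f_j-f^*\|_n^2$, so a union bound over the $\cN$ cover elements gives, with probability at least $1-e^{-u}$, that $\frac1n\sum_i\xi_i(\bar f_j-f^*)(x_i)\le \sigma\|\bar f_j-f^*\|_n\sqrt{2(u+\log\cN)/n}$ for all $j$ simultaneously. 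Applying this at the random nearest element $\bar f$, bounding $\|\bar f-f^*\|_n\le\|\hat f-f^*\|_n+\delta$ by the triangle inequality, and then using AM--GM, $\sigma\|\hat f-f^*\|_n\sqrt{2(u+\log\cN)/n}\le \tfrac18\|\hat f-f^*\|_n^2+c\,\sigma^2(u+\log\cN)/n$, splits $(\mathrm A)$ into a piece proportional to $\|\hat f-f^*\|_n^2$, the complexity term $\sigma^2(\log\cN)/n$, and a cross term $\sigma\delta\sqrt{(\log\cN)/n}$. Integrating the high-probability bound over $u$ yields the matching expectations together with the additive constants responsible for the ``$+2$'' inside the logarithmic factor; note that the variance-dependent radius together with the triangle inequality performs the localization, so no multi-scale chaining is needed, which is consistent with the lemma involving only a single $\cN_2(\delta)$.

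Finally I would collect terms. The $\|\hat f-f^*\|_n^2$ piece coming from $4\times(\mathrm A)$ amounts to a fraction of $T_1=2\,\EE\|\hat f-f^*\|_n^2$; moving it to the left-hand side absorbs it and only inflates the constants, leaving the surviving right-hand side $c\inf_f\norm{f-f^*}^2_{L^2(P_{\rm data})}+c\,\sigma^2(\log\cN_2+2)/n+c(\sqrt{(\log\cN_2+2)/n}+1)\sigma\delta$, which is the claimed bound. I expect the real work to lie in two places: (i) the localization step, where the fact that $\bar f$ is correlated with the noise through $\hat f$ forces the union bound with variance-dependent tails followed by absorption, rather than a single maximal inequality, together with the tail integration that converts the deviation bound into the stated expectation; and (ii) the bookkeeping that keeps every norm empirical ($\|\cdot\|_n$) so that the absorbed term matches $T_1$ exactly while the approximation term remains a genuine $L^2(P_{\rm data})$ quantity.
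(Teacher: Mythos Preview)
The paper does not prove this lemma in detail; it says only that the argument is ``a slight revision of Lemma~5 in \citet{chen2022nonparametric}, where we substitute $\ell_\infty$ covering number with $\ell_2$ covering number.'' Your plan---center by $f^*$, discretize $\hat f$ to a cover element $\bar f$, control the main piece $(\mathrm A)$ by a variance-localized sub-Gaussian maximal inequality with a union bound over the cover, then absorb the resulting $\|\hat f-f^*\|_n^2$ term back into $T_1$---is exactly that standard route, so at the level of strategy you match the paper's referenced proof.

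One step, however, does not go through as written. You assert $\|\hat f-\bar f\|_n\le\delta$ ``inherited from the cover, since all functions are bounded by $1$,'' and use it both for the remainder $(\mathrm B)$ and in the triangle inequality inside $(\mathrm A)$. But the cover here is in the \emph{population} norm $L^2(P_{\rm data})$ (see the construction in Lemma~\ref{lemma:covering-number}), which only guarantees $\|\hat f-\bar f\|_{L^2(P_{\rm data})}\le\delta$; boundedness by $1$ does not convert this into an empirical-norm bound, and since $\bar f$ depends on the same sample that defines $\|\cdot\|_n$ you cannot simply pass to expectations either. In the original $L^\infty$ argument of \citet{chen2022nonparametric} this step is trivial because $\|\hat f-\bar f\|_\infty\le\delta$ dominates every empirical norm; with an $L^2$ cover the discretization remainder needs a separate justification---for instance by noting that the parameter-grid cover of Lemma~\ref{lemma:covering-number} in fact yields sup-norm control on the bounded set $S$, with $S^c$ handled via the uniform bound $1$ and the small mass $\tau$. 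Make this route explicit; the rest of your argument is sound.
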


\textbf{Step 2. Bounding variance term $\bf{T_2}$.}
We observe that $T_2$ is the difference between the population risk of $\hat{f}$ and its empirical risk. However, bounding this difference is distinct from traditional concentration results because of the scaling factor $2$ before the empirical risk. To do this, we divide the empirical risk into two parts and use a higher-order moment (fourth moment) to bound one part. Using a Bernstein-type inequality, we are able to show that $T_2$ converges at a rate of $1/n$, and the upper bound for this is shown in the following lemma.
\begin{lemma}\label{lemma:t2}
For any $\delta \in (0, 1)$, there exists some constant $c'>0$, such that
\begin{align*}
T_2 \leq& \frac{c'}{3n} \log \cN_2(\delta/4H, \cF(L, B, K))+ c'\delta.
\end{align*}
\end{lemma}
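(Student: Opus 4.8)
The plan is to read $T_2 = \EE\|\hat g\|^2_{L^2(P_{\rm data})} - 2\,\EE\bigl[\tfrac1n\sum_{i=1}^n \hat g(x_i)^2\bigr]$, where $\hat g := \hat f - f^*$, as a one-sided deviation between the population and twice the empirical second moment of $\hat g$, and to bound it uniformly over the data-dependent choice of $\hat f \in \cF$ via a covering argument together with Bernstein's inequality. Since $\|f\|_\infty \le 1$ for every $f \in \cF(L,B,K)$ and $f^*$ is bounded, there is a uniform constant $H$ (e.g.\ $H = 2$) with $|\hat g| \le H$; this boundedness is what allows the variance of the squared increments to be controlled by their mean, which is the source of the factor-$2$ trick.

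First I would fix a single function $g$ with $|g| \le H$ and set $Z_i := g(x_i)^2$, so that $\EE Z_i = \|g\|^2_{L^2(P_{\rm data})}$ and $0 \le Z_i \le H^2$. The key self-bounding estimate is
\[
\var(Z_i) \le \EE[Z_i^2] = \EE[g^4] \le H^2\,\EE[g^2] = H^2\,\|g\|^2_{L^2(P_{\rm data})},
\]
i.e.\ the fourth moment is dominated by the second. Bernstein's inequality then gives, with probability at least $1 - e^{-u}$,
\[
\|g\|^2_{L^2(P_{\rm data})} - \frac1n\sum_{i=1}^n Z_i \le \sqrt{\frac{2H^2\,\|g\|^2_{L^2(P_{\rm data})}\,u}{n}} + \frac{H^2 u}{3n},
\]
and applying $\sqrt{ab} \le \tfrac12 a + \tfrac12 b$ to the square-root term peels off $\tfrac12\|g\|^2_{L^2(P_{\rm data})}$. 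Moving this to the left-hand side — which is exactly where the coefficient $2$ of the empirical risk is consumed — yields $\|g\|^2_{L^2(P_{\rm data})} - \tfrac2n\sum_i Z_i \le \tfrac{8H^2 u}{3n}$.

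Next I would take a minimal $\delta/4H$-net of $\cF(L,B,K)$ of cardinality $\cN_2(\delta/4H, \cF(L,B,K))$, apply the last bound to every net element with $u = \log \cN_2(\delta/4H, \cF(L,B,K)) + t$, and union-bound. For the realized $\hat g$, letting $\bar g$ be formed from the net element nearest to $\hat f$, we have $\hat g - \bar g = \hat f - \bar f$ of norm at most $\delta/4H$, and since $|g^2 - \bar g^2| \le 2H\,|g - \bar g|$ the discretization error in both the population term and the doubled empirical term is $O(\delta)$. Hence, with probability at least $1 - e^{-t}$, the quantity $\|\hat g\|^2_{L^2(P_{\rm data})} - \tfrac2n\sum_i \hat g(x_i)^2$ is at most $\tfrac{8H^2}{3n}\bigl(\log\cN_2(\delta/4H, \cF(L,B,K)) + t\bigr) + c'\delta$. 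Since this quantity is also deterministically bounded above by $\|\hat g\|^2_{L^2(P_{\rm data})} \le H^2$, integrating the tail over $t \in (0,\infty)$ turns the high-probability statement into the expectation $T_2$ and contributes only a further $O(1/n)$ term, which is absorbed. Renaming the constant gives the claim.

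I expect the main obstacle to be the self-bounding step, namely verifying $\var(g(x_i)^2) \le H^2\|g\|^2_{L^2(P_{\rm data})}$ so that the deviation scales with the mean rather than with a bare constant; without this the factor-$2$ in $T_2$ could not absorb the Bernstein fluctuation. A secondary point of care is norm compatibility in the discretization: the empirical-term error $\tfrac2n\sum_i(\hat g(x_i)^2 - \bar g(x_i)^2)$ must be controlled by the covering distance, so the net should be taken in a norm (sup-norm, or one dominating the empirical $L^2$ distance) that makes this error deterministically $O(\delta)$; the final tail integration likewise relies on the deterministic bound $\|\hat g\|^2 \le H^2$ to keep the extra contribution lower order.
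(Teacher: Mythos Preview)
Your proposal is correct and follows essentially the same approach as the paper: the paper explicitly describes using a Bernstein-type inequality together with the fourth-moment self-bounding estimate $\EE[g^4]\le H^2\EE[g^2]$ to obtain the $1/n$ rate, then a covering argument (citing Lemma~6 of \citet{chen2022nonparametric} with $\ell_\infty$ covering replaced by $\ell_2$). Your identification of the factor-$2$ absorption via $\sqrt{ab}\le\tfrac12 a+\tfrac12 b$ and your flagged concern about norm compatibility in the discretization step are both on point; the latter is precisely the place where the paper's substitution of $\ell_2$ for $\ell_\infty$ covering requires the additional care you anticipate.
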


\textbf{Step 3. Covering number of neural networks.}
The upper bounds of $T_1$ and $T_2$ in Lemma \ref{lemma:t1} and \ref{lemma:t2} both rely on the covering number of the network class $\cF(R, \kappa, L, p, K)$. In this step, we present an upper bound for the covering number $\cN_2(\delta, \cF(L, B, K) )$ for a given a resolution $\delta > 0$. 
\begin{lemma}[Covering number bound for $\cF$ (Lemma 21 in \citet{nakada2020adaptive})]\label{lemma:covering-number}
  Given $\delta>0$, the $\delta$-covering number of the neural network class $\cF(L,B,K)$ satisfies
  \begin{align*}
    \log \cN_2(\delta, \cF( L, B,K)) \leq K \log\biggl(\! \frac{2^L \sqrt{dL} K^{L/2} B^L R_S }{\sqrt{\delta^2-4\tau}}\! \biggr).
  \end{align*}
\end{lemma}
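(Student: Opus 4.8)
The plan is to follow the standard recipe for bounding the metric entropy of a parametric ReLU network class: place a fine grid on the weight space, show that the map from parameters to functions is Lipschitz, and count the grid points, with logarithms turning the count into the stated bound. The only feature specific to our setting is that the target metric is $L^2(P_{\rm data})$ rather than the sup norm, which we handle by exploiting that every $f \in \cF(L,B,K)$ satisfies $\norm{f}_\infty \leq 1$ together with $P_{\rm data}(S^c) \leq \tau$. First I would reduce the $L^2(P_{\rm data})$-covering problem to a sup-norm covering problem on $S$. For two networks $f,\bar f \in \cF(L,B,K)$, split the integral as $\norm{f-\bar f}^2_{L^2(P_{\rm data})} = \int_S (f-\bar f)^2\,\ud P_{\rm data} + \int_{S^c}(f-\bar f)^2\,\ud P_{\rm data}$; since $\norm{f}_\infty,\norm{\bar f}_\infty \leq 1$ the integrand is at most $4$, so the second term is at most $4P_{\rm data}(S^c)\leq 4\tau$. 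Hence if $\bar f$ approximates $f$ to accuracy $\delta':=\sqrt{\delta^2-4\tau}$ in sup norm over $S$, then $\norm{f-\bar f}_{L^2(P_{\rm data})}\leq \delta$. This is exactly where the denominator $\sqrt{\delta^2-4\tau}$ originates, and it suffices to build a $\delta'$-cover of $\cF(L,B,K)$ in the sup norm on $S$.

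The technical core is the parameter-to-function Lipschitz bound. Write $f_\theta,f_{\theta'}$ for two networks whose weight and bias entries differ by at most $\rho$ in absolute value, and let $h_i,h_i'$ denote the post-activation outputs of layer $i$. Using that $\relu$ is $1$-Lipschitz and unrolling the recursion $h_i=\relu(W_i h_{i-1}+b_i)$ layer by layer gives the telescoping estimate
\begin{equation*}
\norm{f_\theta-f_{\theta'}}_{\infty,S} \leq \sum_{i=1}^L \Big(\prod_{j>i}\norm{W_j}_2\Big)\Big(\norm{W_i-W_i'}_2\,\sup_{x\in S}\norm{h_{i-1}'(x)}_2 + \norm{b_i-b_i'}_2\Big).
\end{equation*}
Each weight matrix has at most $K$ nonzero entries of magnitude at most $B$, so $\norm{W_j}_2\leq\norm{W_j}_F\leq B\sqrt{K}$, and likewise $\norm{W_i-W_i'}_2\leq \rho\sqrt{K}$ and $\norm{b_i-b_i'}_2\leq \rho\sqrt{K}$; the inputs obey $\sup_{x\in S}\norm{x}_2\leq\sqrt{d}R_S$, whence $\sup_{x\in S}\norm{h_{i-1}'(x)}_2\leq (B\sqrt{K})^{i-1}\sqrt{d}R_S$ up to lower-order bias contributions. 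Collecting the powers of $B$, $K$, $d$, the factor $L$ from the sum, and a factor $2^L$ absorbing the weight-versus-bias terms yields $\norm{f_\theta-f_{\theta'}}_{\infty,S}\leq\Lambda\rho$ with $\Lambda=O(2^L\sqrt{dL}\,K^{L/2}B^{L-1}R_S)$.

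Finally I would discretize and count. Placing a uniform grid of resolution $\rho:=\delta'/\Lambda$ on each of the at most $K$ active parameters, each confined to $[-B,B]$, produces at most $(2B/\rho)^K=(2B\Lambda/\delta')^K$ parameter vectors (the choice of which coordinates are active contributes only a lower-order combinatorial factor absorbed into the constant inside the logarithm). Rounding any $\theta$ to the nearest grid point changes the function by at most $\Lambda\rho=\delta'$ in sup norm on $S$, hence by $\delta$ in $L^2(P_{\rm data})$ by the first step. Substituting $\Lambda$ and $\delta'=\sqrt{\delta^2-4\tau}$ and taking logarithms gives $\log\cN_2(\delta,\cF(L,B,K))\leq K\log\big(2B\Lambda/\sqrt{\delta^2-4\tau}\big)$, which matches the claimed bound after absorbing the remaining numerical constants into the argument of the logarithm.

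I expect the main obstacle to be the Lipschitz step: propagating a single-parameter perturbation through all $L$ layers while keeping the intermediate activation norms controlled, so that the final constant carries exactly the powers $B^{L-1}$, $K^{L/2}$ and $\sqrt{dL}$ rather than the looser $B^{2L}$ or $K^{L}$ that a careless recursion produces. The delicate bookkeeping is choosing, at each step, between the spectral norm $\norm{\cdot}_2$ and the entrywise bound, and converting between them via $\norm{\cdot}_F\leq B\sqrt{K}$, so that the amplification factors $\prod_{j>i}\norm{W_j}_2$ and the input bound $\prod_{j<i}\norm{W_j}_2$ multiply to a single power of $(B\sqrt{K})^{L-1}$ per layer.
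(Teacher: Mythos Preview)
Your proposal is correct and follows the same high-level recipe as the paper (grid the parameters, control the effect of a parameter perturbation on the network output, count the grid points), but you route the perturbation analysis through the sup norm on $S$ whereas the paper works directly in $L^2(P_{\rm data},S)$. Concretely, you first reduce to a $\sqrt{\delta^2-4\tau}$ sup-norm cover on $S$ via the split over $S$ and $S^c$, and then run a pointwise telescoping bound $\norm{f_\theta-f_{\theta'}}_{\infty,S}\leq \Lambda\rho$ with $\Lambda$ built from products of $\norm{W_j}_2\leq B\sqrt{K}$; the paper instead keeps the integral $\int_S\norm{\cdot}^2\,\ud P_{\rm data}$ at every layer, repeatedly applying $\norm{t+s}^2\leq 2\norm{t}^2+2\norm{s}^2$ to obtain $\norm{f-f'}^2_{L^2(P_{\rm data},S)}\leq 4^{L-1} L K^L B^{2L-2} h^2 d R_S^2$, and only brings in the $4\tau$ contribution from $S^c$ at the very end. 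Your approach is the more standard Lipschitz/telescoping argument and is arguably cleaner; the paper's squared-norm recursion avoids ever invoking a pointwise bound but picks up the same powers of $B$, $K$, $d$, $L$ (the $4^{L-1}$ in the paper plays the role of your $2^L$ once square roots are taken). Both routes land on the same grid resolution and hence the same $K\log(\cdot)$ entropy bound.
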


\textbf{Step 4. Bias-Variance Trade-off.}
Now we are ready to finish the proof of Theorem \ref{thm:generalization}. Combining the upper bounds of $T_1$ in Lemma \ref{lemma:t1} and $T_2$ in Lemma \ref{lemma:t2} together and substituting the covering number in  Lemma \ref{lemma:covering-number}, we obtain
\begin{align*}
\EE \norm{ \hat{f} - f^*}^2_{L^2(P_{\rm data})} 
        =  O \!\biggl(\! \tau + d^{2\beta} r^{2\beta} \!+\sigma\delta+  \frac{\sigma^2 K}{n} \log\biggl( \frac{ \sqrt{dL} K^{L/2} B^L R_S }{\sqrt{\delta^2-4\tau}}  \biggr)\!\biggr),
\end{align*}
where we set approximation error to be $d^{2\beta} r^{2\beta}$. 
Plug in our choice of $(L,B,K)$, and choose $\delta = r^{2\beta}$. Then we can conclude
\begin{align*}
    \EE \norm{ \hat{f} - f^*}^2_{L^2(P_{\rm data})} =  O\biggl(   \tau +  \sigma r^{2\beta} 
    +\frac{\sigma^2}{n} \biggl(\frac{R_S}{r} \biggr)^p \log\biggl( \frac{(R_S/r)^p}{ r^{4\beta}-4\tau}\biggr) \biggr).
\end{align*}

\subsection{Proof Sketch of Lemma \ref{lemma:gaussian}}\label{sec:gaussian}

In this section, we present our basic idea to construct $S(R,r;p)$
and the proof sketch of Lemma \ref{lemma:gaussian}. For simplicity of proof, we assume $Q=I$ so that $\Sigma=\Lambda=\diag(\lambda_1^2,\ldots,\lambda_d^2)$. The detailed proof is given in Appendix \ref{proof:gaussian}, which can be easily extended to the case when $Q$ is not an identity matrix. The proof of Theorem \ref{thm:generalization-exp} is given in Appendix \ref{sec:dim-gaussian}.

Given the Gaussian sample distribution, we hope to choose some region in $S \subset \RR^d$ with high probability measure and effective Minkowski dimension $p<d$. Then we can only apply neural networks to approximate $f^*$ within each cube of the small covering of $S$ and thereby significantly reduce the network size.
In literature, it is common to truncate the ambient space within a hyper-ellipsoid for Gaussian distribution \citep{ellis2007multivariate,pakman2014exact}. Similarly, we consider the 'thick' low-dimensional hyper-ellipsoid $S(R,r;p)$ defined in \eqref{eq:set}. 
Then we construct a minimal cover of $S(R,r;p)$ as a union of nonoverlapping hypercubes with side length $r$, which is equal to the thickness of $S(R,r;p)$. In particular, this cover contains multiple layers of hypercubes to cover the first $p$ dimensions of $S(R,r;p)$ while only needs one layer for the rest dimensions. Intuitively, we only learn about hypercubes that cover the first $p$ dimensions without paying extra effort to study dimensions with low probability density.

A natural question arising from the construction of $S(R,r;p)$ is how to select a proper dimension $p$. To address this problem, we first notice that each side length of the $p$-dimension hyper-ellipsoid is supposed to be greater than the side length of hypercubes $r$, i.e. $2\lambda_i R \geq r$ for $i=1,\ldots,p$, so that we would not waste hypercubes to cover dimensions with too small side length. For simplicity of calculation, we choose $p>0$ that satisfies $\lambda_p^{-1} = 2R/r$ for any given $R,r>0$.  

Now we come to prove Lemma \ref{lemma:gaussian}. Firstly, we compute the probability outside $S(R,r;p)$. By union bound, this probability can be upper bounded by two parts, the probability out of hyper-ellipsoid for the first $p$ dimensions and the probability out of hypercube with side length $r$ for the rest $d-p$ dimensions. The first part is equal to the tail bound of $p$-dimensional standard Gaussian by the construction of $S(R,r;p)$. The second part can be solved similarly by linearly transforming each dimension to be standard Gaussian.

Then we calculate the covering number of $S(R,r;p)$. Notice that the first $p$ dimensions of $S(R,r;p)$ is contained in a $p$-dimensional hyper-rectangle with side length $2\lambda_i R$ for $i=1,\ldots,p$, while only one hypercube is required to cover the $j$-th dimension for $j=p+1,\ldots,d$. Therefore, the $r$-covering number can be upper bounded by $\prod_{i=1}^p (2\lambda_i R)/r^p$.

\section{Discussion and Conclusion}

In this paper, we have presented a generic approximation and generalization theory and applied it to Gaussian Random Design. Furthermore, our theory is applicable to scenarios where data are sampled from a mixture of distributions, denoted as $P_{\rm data} = \sum_{i} w_i P_i$. Each distribution $P_i$ is assigned a weight $w_i$ and has a low-dimensional support. In such cases, we focus on a subset of distributions with significant weights, while neglecting distributions with small weights. Then the effective Minkowski dimension depends on the data support of the selected distributions. As the sample size grows, including more distributions becomes necessary to achieve higher estimation accuracy. Another example where our theory can be applied is the case of an approximate manifold, where the data are concentrated on a low-dimensional manifold. In this scenario, the effective Minkowski dimension corresponds to the intrinsic dimension of the manifold.

To illustrate the concept of effective dimension in real-world examples, we refer to a study by \citet{pope2021intrinsic}, which investigates the intrinsic dimension of several popular benchmark datasets for deep learning. The intrinsic dimension presented in \citet{pope2021intrinsic} can be seen as an approximate estimate of the Minkowski dimension, as demonstrated in \citet{levina2004maximum,grassberger1983measuring}. In our work, we adopt the methodology employed by \citet{pope2021intrinsic} and utilize generative adversarial networks trained on the ImageNet dataset to generate samples containing varying numbers of daisy images. To estimate the intrinsic dimension of these generated samples, we employ the Maximum Likelihood Estimation (MLE) method, which is achieved by computing the Euclidean distances between each data point and its $k$ nearest neighbors. The obtained results are presented in Figure \ref{figure}, which clearly demonstrates that the intrinsic dimension estimated from a finite sample of images increases as the sample size grows. This finding aligns with our theory that the effective Minkowski dimension is an increasing function of the sample size.
\begin{figure}
    \centering
    \includegraphics[width=0.61\textwidth]{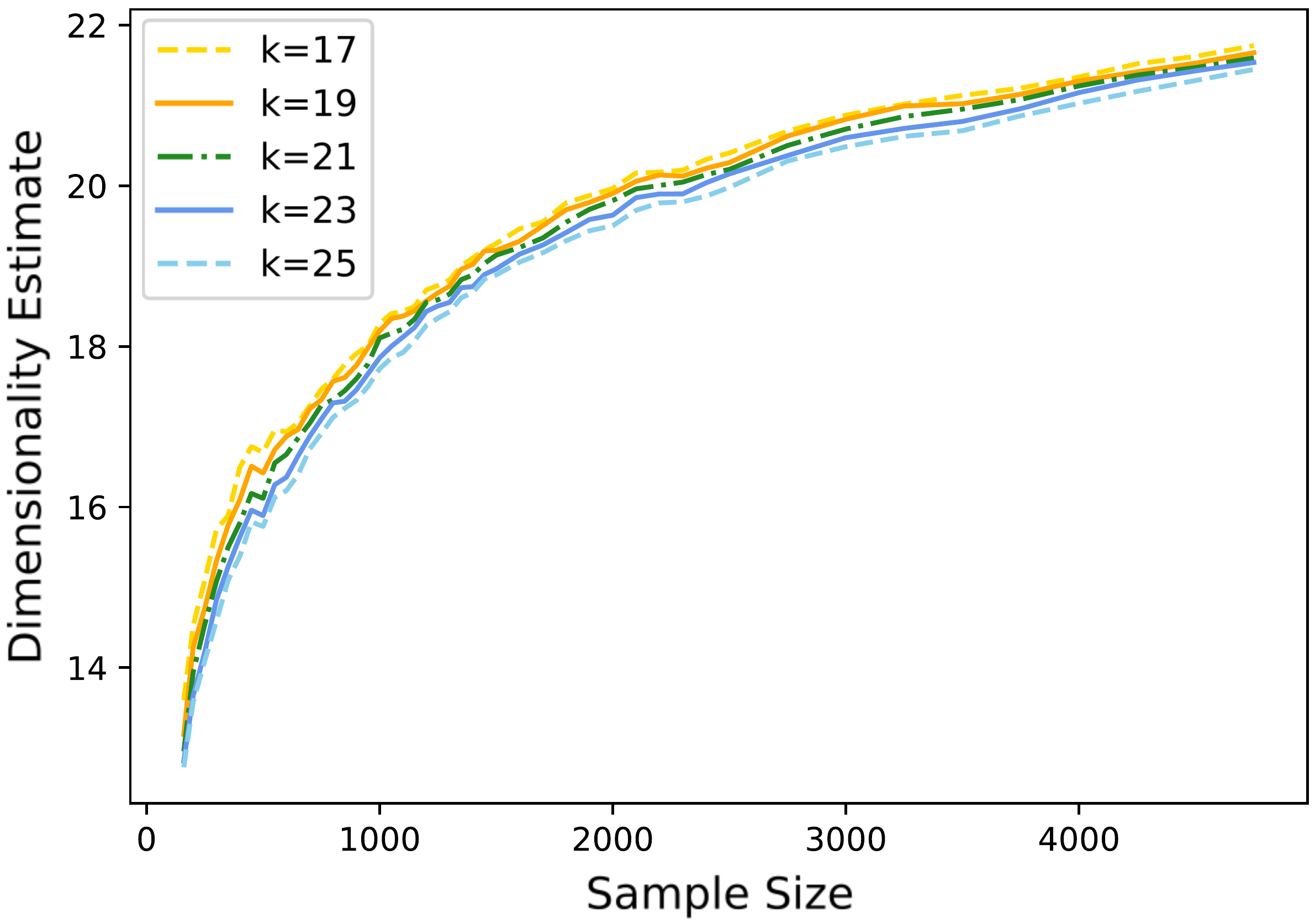}
    \caption{Dimensionality estimates of images obtained using the MLE method with $k$ nearest neighbors under different sample size.}
    \label{figure}
\end{figure}

In conclusion, this paper studies nonparametric regression of functions supported in $\RR^d$ under data distribution with effective Minkowski dimension $p<d$, using deep neural networks. Our results show that the $L^2$ error for the estimation of $f^* \in \cH(\beta,\cD_{\rm data})$ converges in the order of $n^{-2\beta/(2\beta+p)}$. To obtain an $\epsilon$-error for the estimation of $f^*$, the sample complexity scales in the order of $\epsilon^{-(2\beta+p)/\beta}$, which demonstrates that deep neural networks can capture the effective Minkowski dimension $p$ of data distribution. Such results can be viewed as theoretical justifications for the empirical success of deep learning in various real-world applications where data are approximately concentrated on a low-dimensional set.

\bibliographystyle{ims}
\bibliography{ref}

\appendix
\section{Proof of Theorem \ref{lemma:approx}}\label{appendix:approx}
In this section, we provide the omitted proof in Section \ref{sec:proof-sketch-approx}.

\subsection{Lemma \ref{lem:covering_division}}
\begin{lemma}[Lemma 20 in \citet{nakada2020adaptive}]\label{lem:covering_division}
    Let $\cC=\{I_k\}_{k=1}^{N_r(S)}$ be a minimum $r$-covering of $S$ where $I_k$'s are hypercubes with side length $r$.  Then, there exists a disjoint partition $\{\cC_j\}_{j=1}^{5^d} \subset \cC$ such that $\cC = \bigcup_{j=1}^{5^d} \cC_j$ and $d(I_i, I_l) \geq r$ hold for any $I_i \neq I_l \in \cC_j$ if $\card(\cC_j) \geq 2$, where $d(\cA,\cB):= \inf \{\norm{x-y} | x\in\cA,y\in\cB\}$ is defined as the distance of any two sets $\cA$ and 
    $\cB$.  
\end{lemma}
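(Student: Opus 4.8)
The plan is to recast the statement as a bounded-degree graph coloring problem. Define an undirected graph $G$ whose vertex set is $\cC$ and which places an edge between two distinct cubes $I_i, I_l \in \cC$ precisely when $d(I_i, I_l) < r$. A proper vertex coloring of $G$ with the palette $\{1,\dots,5^d\}$ is then exactly a partition $\cC = \bigcup_{j=1}^{5^d}\cC_j$ in which any two distinct cubes sharing a color satisfy $d(I_i, I_l) \geq r$, which is the conclusion we want (classes that are empty or singletons satisfy the separation condition vacuously). Hence it suffices to show that $G$ is $5^d$-colorable, and by the greedy-coloring bound it is enough to prove that every vertex of $G$ has degree at most $5^d - 1$.

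First I would localize the neighbors of a fixed cube. Let $I \in \cC$ have center $c$, and suppose $I' \in \cC$ satisfies $d(I, I') < r$. Since the Euclidean set-distance dominates the coordinatewise ($L^\infty$) set-distance, $d(I, I') < r$ forces the $L^\infty$ separation of the two cubes to be below $r$; as each cube has side $r$, this in turn forces the center $c'$ of $I'$ to obey $\norm{c - c'}_\infty < 2r$, since in every coordinate the two side-$r$ intervals must overlap after being dilated by less than $r$. Consequently every neighbor $I'$ of $I$, as well as $I$ itself, is contained in the axis-aligned box $B := \prod_{i=1}^d [\,c_i - \tfrac{5r}{2},\, c_i + \tfrac{5r}{2}\,]$ of side length $5r$.

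Next I would invoke a packing/volume argument. Because $\cC$ is built from nonoverlapping hypercubes of side $r$ (pairwise disjoint interiors), the cube $I$ together with all of its neighbors form pairwise interior-disjoint subsets of $B$, each of volume $r^d$, while $\mathrm{vol}(B) = (5r)^d = 5^d r^d$. Therefore their number is at most $(5r)^d/r^d = 5^d$, so $I$ has at most $5^d - 1$ neighbors in $G$. This establishes the degree bound, hence the existence of a proper $5^d$-coloring, and reading the color classes back as $\cC_1,\dots,\cC_{5^d}$ (padding with empty classes if fewer colors are used) gives the claimed partition.

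I expect the substantive step to be the localization combined with the packing count, in particular pinning the constant to exactly $5^d$; the delicate points are translating the hypothesis $d(I,I') < r$ (stated in the ambient norm) into the $L^\infty$ control $\norm{c-c'}_\infty < 2r$ on centers, and ensuring the covering cubes may legitimately be taken interior-disjoint so that the volume comparison applies. If one instead works with a grid-aligned minimal cover, the same bound follows even more explicitly: indexing each cube by its integer grid coordinate $n^{(k)} \in \mathbb{Z}^d$ and coloring by $(n_1^{(k)} \bmod 5, \dots, n_d^{(k)} \bmod 5)$ yields $5^d$ classes, and two distinct equicolored cubes differ by at least $5$ in some coordinate, forcing a coordinate gap of at least $4r \geq r$ and hence $d(I_i,I_l) \geq r$. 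Either route closes the argument.
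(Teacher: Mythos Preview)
The paper does not provide its own proof of this lemma; it is simply quoted as Lemma~20 of \citet{nakada2020adaptive} and used as a black box. There is therefore nothing in the present paper to compare your argument against.

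Your proposal is correct in substance. Recasting the separation requirement as a proper coloring of the conflict graph and bounding the chromatic number by $1+\Delta(G)$ via greedy coloring is the natural route; the localization ``$d(I,I')<r \Rightarrow \|c-c'\|_\infty<2r \Rightarrow I'\subset c+[-\tfrac{5r}{2},\tfrac{5r}{2}]^d$'' and the ensuing volume count are clean and give exactly the constant $5^d$. You are right to flag interior-disjointness of the covering cubes as the load-bearing hypothesis for the packing step: the lemma statement does not assert it, but the paper's own construction of the cover (see the Gaussian application and the hypercube-determination step) takes the cubes to be nonoverlapping and grid-aligned, so the assumption is in force in context. Your grid-aligned coloring by residues modulo~$5$ is the most transparent way to close the argument and avoids the disjointness issue altogether; either version suffices.
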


\subsection{Realization of hypercube determination function $g_I^{\text{ind},r}$}\label{sec:realize-ind}

Hypercube determination function $g_I^{\text{ind},r}$ can be realized by weight matrices and intercepts $(4,0)\odot(W^2,-d)\odot[(W_1^1,b_1^1),\ldots,(W_d^1,b_d^1)]$ where $W_i^1,b_i^1$ and $W^2$ are defined by
\begin{align*}
    W_i^1 := \begin{pmatrix}e_i^\top & e_i^\top & e_i^\top & e_i^\top \\ 0 & 0 & 0 & 0\end{pmatrix}^\top,~~
    b_i^1 := \biggl(-\iota_i + r ~~ -\iota_i + \frac{r}{2} ~~ -\iota_i - \frac{r}{2} ~~ -\iota_i - r \biggr),
\end{align*}
and
\begin{align*}
    W^2 = \bigg(&\underbrace{\frac{2}{r}, -\frac{2}{r}, -\frac{2}{r}, \frac{2}{r},  \frac{2}{r}, -\frac{2}{r}, -\frac{2}{r}, \frac{2}{r}, \dots, \frac{2}{r}, -\frac{2}{r}, -\frac{2}{r}, \frac{2}{r}}_{4d},\frac{1}{4} \bigg).
\end{align*}
The above realization gives exactly the form in \eqref{eq:identical}. Moreover, we summarize the properties of $g_I^{\text{ind},r}$ as following:
\begin{proposition}\label{prop:ind}
    For any $x\in \cD_{\rm data}$ and $y \in \RR$, we have
    \begin{align*}
        g_I^{\text{ind},r}(x,y)
        \begin{cases}
            =y,  &x\in I \text{ and } y \in [0,4], \\
            \leq y,  &x\in I\oplus \frac{r}{2}\text{ and } y \in [0,4], \\
            =0,  &\text{otherwise}. \\
        \end{cases}
    \end{align*}
    Furthermore, we obtain the following properties
    \begin{enumerate}
        \item $L(g_I^{\text{ind},r})=3$,
        \item $B(g_I^{\text{ind},r})\leq \max\{4,d,1+r,2/r\}$,
        \item $K(g_I^{\text{ind},r})=24d+6$.
    \end{enumerate}
\end{proposition}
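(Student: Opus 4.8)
The plan is to split the proposition into two independent pieces: the pointwise input--output behavior described by the three cases, and the structural constants $L$, $B$, $K$ read off from the explicit realization in Section~\ref{sec:realize-ind}. Throughout I work in the regime $y\in[0,4]$ relevant to the application, so that the ``otherwise'' branch means $x\notin I\oplus\frac{r}{2}$. The entire functional analysis reduces to understanding a single trapezoidal bump $\hat{\ind}^r_{I,i}$ and the aggregate $\sum_{i=1}^d\hat{\ind}^r_{I,i}(x_i)$ that feeds the outer $\relu$ in \eqref{eq:identical}.

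First I would record the elementary properties straight from the piecewise definition: $\hat{\ind}^r_{I,i}(z)\in[0,1]$ for every $z$, with $\hat{\ind}^r_{I,i}(z)=1$ exactly on the core $(\iota_i-\frac{r}{2},\iota_i+\frac{r}{2}]$ (the $i$-th side of $I$) and $\hat{\ind}^r_{I,i}(z)=0$ exactly outside the support $(\iota_i-r,\iota_i+r]$ (the $i$-th side of $I\oplus\frac{r}{2}$). Summing over coordinates gives three regimes for $\sum_{i=1}^d\hat{\ind}^r_{I,i}(x_i)$: it equals $d$ iff every $x_i$ lies in its core, i.e. $x\in I$; it is at most $d$ whenever $x\in I\oplus\frac{r}{2}$; and it is at most $d-1$ as soon as $x\notin I\oplus\frac{r}{2}$, since then some coordinate leaves its support and contributes $0$ while the other $d-1$ terms contribute at most $1$ each. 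Substituting each regime into $g_I^{\text{ind},r}(x,y)=4\relu\big(\sum_i\hat{\ind}^r_{I,i}(x_i)+\tfrac{y}{4}-d\big)$ and using that $\relu$ is nondecreasing with $\tfrac{y}{4}\ge0$ then yields all three claims at once: on $I$ the argument is $\tfrac{y}{4}\ge0$, so $g_I^{\text{ind},r}=4\cdot\tfrac{y}{4}=y$; on $I\oplus\frac{r}{2}$ the argument is at most $\tfrac{y}{4}$, so $g_I^{\text{ind},r}\le 4\relu(\tfrac{y}{4})=y$; and off $I\oplus\frac{r}{2}$ the argument is at most $\tfrac{y}{4}-1\le0$ for $y\le4$, so $g_I^{\text{ind},r}=0$.

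For the structural constants I would first confirm that the weights and biases of Section~\ref{sec:realize-ind} genuinely realize \eqref{eq:identical}: the coefficient pattern $(\tfrac{2}{r},-\tfrac{2}{r},-\tfrac{2}{r},\tfrac{2}{r})$ in $W^2$, applied to the four first-layer $\relu$ units shifted by the four entries of $b_i^1$, reproduces the telescoping four-ReLU decomposition of each trapezoid $\hat{\ind}^r_{I,i}$, while the trailing entry $\tfrac14$ of $W^2$ carries $\relu(y)=y$ (valid since $y\ge0$) and the final $(4,0)$ layer multiplies by $4$. Granting this identity, the realization has exactly three affine layers, so $L=3$; taking the maximum magnitude over all entries ($e_i$-weights equal to $1$, the layer-one biases $-\iota_i\pm r,\,-\iota_i\pm\tfrac{r}{2}$, the $\pm\tfrac{2}{r}$ and $\tfrac14$ entries of $W^2$, the bias $-d$, and the final weight $4$) gives $B\le\max\{4,d,1+r,2/r\}$; and enumerating the nonzero weights and biases layer by layer gives $K=24d+6$.

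The genuinely load-bearing step is the boundary bookkeeping in the second paragraph, namely pinning down the core and support intervals precisely enough that a single coordinate leaving the support forces $\sum_i\hat{\ind}^r_{I,i}(x_i)\le d-1$, which is exactly what combines with $y\le 4$ to annihilate the outer $\relu$. Verifying the telescoping four-ReLU identity for the trapezoid is the one non-bookkeeping computation, but it is local to each coordinate and routine; everything else is counting.
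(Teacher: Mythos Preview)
The paper does not actually prove Proposition~\ref{prop:ind}; it is stated immediately after the explicit weight matrices in Section~\ref{sec:realize-ind} and left to the reader. Your verification---reading off the three output cases from the trapezoid sum in \eqref{eq:identical} and reading off $L,B,K$ from the listed parameters---is precisely the intended argument, and it is correct. Two small points worth flagging: the bias bound $1+r$ tacitly assumes the cube centers satisfy $|\iota_i|\le 1$ (consistent with the paper's later use but not explicit here), and, as you rightly observe, the ``otherwise $=0$'' clause is only literally valid for $y\le 4$, which is harmless since the construction only ever feeds in values $y\in[1,3]$.
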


\subsection{Bounding the approximation error}\label{sec:approx-error-bound}

Firstly, we compute the approximation error of using $g^{f_0}_\epsilon$ to estimate $f_0=f^*+2$. Recall that we defined $g^{f_0}_\epsilon:=g^{\max,5^d} \circ g^{\text{sum}} \circ g^{\text{simul}}_{\epsilon/2}$. When $x \in S$, there exists some $I \in \cC$ such that $x \in I$. Then for this $x$, we have
\begin{align*}
    g^{f_0}_\epsilon(x) &= \max_{j \in [5^d]} \sum_{I_k \in \cC_j} g^{\text{simul}}_{\epsilon/2,k}(x)
    = \max_{I_k \in \neig(I)}  g^{\text{simul}}_{\epsilon/2,k}(x)  
    \leq \max_{I_k \in \neig(I)}  g^{\poly}_{\epsilon/2,k}(x),
\end{align*}
where $\neig(I)=\{ I' \in \cC | (I \oplus 3r/2)\cap I' \neq \emptyset \}$ denotes the $3r/2$-neighborhood of hypercube $I$. 
In other words, when computing $g^{f_0}_\epsilon(x)$, we only need to take maximum over the estimated function value within hypercubes near $x$. The second equality follows from the Proposition \ref{prop:ind} that 
 $g^{\text{simul}}_{\epsilon/2,l}(x)=0$ for $I_l \not \in \neig(I)$ and $d(I_l, I_k) > r$ holds for $I_l \neq I_k \in \cC_i$ for all $i$. The last inequality is due to the construction of $g^{\text{simul}}_{\epsilon/2}$ in \eqref{eq:simul}.

Given $\epsilon \in (0, 1)$, we ensure $0 \leq g^{\text{simul}}_{\epsilon/2,k}(x) \leq 4$ for all $I_k \in \cC$ by Proposition \ref{prop:ind}, since $g^{\text{simul}}_{\epsilon/2,k}$ approximates $f_{I_k}$ which is an $\epsilon/2$-accuracy Taylor polynomial of $f_0 \in [1, 3]$. When $x \in I_t$, the error is bounded as
\begin{align*}
    | g^{f_0}_\epsilon (x) - f_0(x) | 
    =& \max \Bigl\{  \max_{I_k \in \neig(I_t)}  g^{\text{simul}}_{\epsilon/2,k}(x) -f_0(x), f_0(x)- \max_{I_k \in \neig(I_t)}  g^{\text{simul}}_{\epsilon/2,k}(x)         \Bigr\} \\
    \leq & \max \Bigl\{  \max_{I_k \in \neig(I_t)}  g^{\poly}_{\epsilon/2,k}(x)-f_0(x), f_0(x)- g^{\text{poly}}_{\epsilon/2,t}(x) \Bigr\}\\
    \leq& \max_{I_k \in \neig(I_t)}  \Bigl| g^{\poly}_{\epsilon/2,k}(x)-f_0(x) \Bigr| \\
    \leq& \max_{I_k \in \neig(I_t)}  \Bigl| g^{\poly}_{\epsilon/2,k}(x)-f_{I_k}(x) \Bigr|
     + \max_{I_k \in \neig(I_t)}  \bigl| f_{I_k}(x) -f_0(x) \bigr| \\
    \leq& \frac{\varepsilon}{2} + d^\beta \biggl(\frac{3r}{2} \biggr)^\beta \leq \epsilon,
\end{align*}
where the last inequality follows from \eqref{eq:poly} and Lemma \ref{lemma: taylor_approx}. 
In terms of parameter tuning, we choose $r = d^{-1} \epsilon^{1/\beta}/2$. 

Next, we extend approximation results of $f_0$ to $f^*$. To do so, we firstly implement a neural network  $g^{\text{mod}}(z)=(-z+1)\circ\relu(-z+2)\circ \relu(z-1)$, which has the equivalent form $g^{\text{mod}}(z)=\min(\max(1,x),3)-2$ for any $z\in \RR$. In addition, $g^{\text{mod}}$ has the following properties:
\begin{equation*}
    L(g^{\text{mod}}) = 3,~~B(g^{\text{mod}}) \leq 2,~~\text{and  }K(g^{\text{mod}}) = 12.
\end{equation*}

Then consider $g^{f^*}_\epsilon = g^{\text{mod}}\circ g^{f_0}_\epsilon$ to obtain the desired approximation error $\epsilon$ for any $x\in S$ 
\begin{align*}
    \sup_{x \in \cD_{\rm data}} |g^{f^*}_\epsilon(x) - f^*(x)| 
    =& \sup_{x \in \cD_{\rm data}} \bigl|\min\bigl(\max\bigl(1,g^{f_0}_\epsilon(x) \bigr),3\bigr) - (f^*(x) + 2)\bigr|\\
    =& \sup_{x \in \cD_{\rm data}} \bigl|\min\bigl(\max\bigl(1,g^{f_0}_\epsilon(x) \bigr),3\bigr) - f_0(x) \bigr|\\
    \leq& \sup_{x \in \cD_{\rm data}} \bigl| g^{f_0}_\epsilon(x)  - f_0(x) \bigr|\\
    \leq& \epsilon.
\end{align*}

\subsection{Computing network sizes}\label{sec:parameter}

Recall that the ReLU neural network $g^{f^*}_\epsilon$ is defined as  
\begin{align*}
    g^{f^*}_\epsilon=& g^{\text{mod}}\circ g^{\max,5^d} \circ g^{\text{sum}} \circ g^{\text{simul}}_{\epsilon/2}\\
    =& g^{\text{mod}}\circ g^{\max,5^d} \circ g^{\text{sum}} \circ\bigl(g_{I_1}^{\text{ind},r}\circ g^{\text{filter}}_1 ,\ldots,g_{I_{ N_r(S)}}^{\text{ind},r}
    \circ g^{\text{filter}}_{ N_r(S)} \bigr) 
    \circ (g^{Id}_{d,L},  g^{\poly}_{\epsilon/2}).
\end{align*}
Note that $N_r(S) \leq c_0 r^{-p}$.
Combined with sub-neural network structures given in Appendix B.1.1 of \citet{nakada2020adaptive}, $g^{f^*}_\epsilon$ has the following properties:
\begin{align*}
    L(g^{f^*}_\epsilon(x)) &= L(g^{\text{mod}}) + L(g^{\max,5^d}) + L(g_{I_1}^{\text{ind},r}) + L(g^{\text{filter}}_1) + L(g^{\poly}_{\epsilon/2})\\
    &\leq 11 + 2d \log_2 5 + (11 + (1+\beta)/d)(2 + \log_2\beta),\\
    B(g^{f^*}_\epsilon(x)) &\leq \max\bigl\{B(g^{\text{mod}}), B(g^{\max,5^d}),B(g_{I_1}^{\text{ind},r}),B(g^{\text{filter}}_1) , B(g^{Id}_{d,L}),B(g^{\poly}_{\epsilon/2})\bigr\},\\
    &\leq \max\{d, 1 + r, 2 /r, c_1^{\poly} R_S^{\beta s_1^{\poly}}\epsilon^{-s_1^{\poly}} \},\\
    &\leq \max\{ 4d \epsilon^{-1/\beta} , c_1^{\poly} R_S^{\beta s_1^{\poly}}\epsilon^{-s_1^{\poly}} \},\\
    K(g^{f^*}_\epsilon(x)) &\leq 2K(g^{\text{mod}}) + 2K(g^{\max,5^d})  + 2N_r(S)\cdot K(g_{I_1}^{\text{ind},r}\circ g^{\text{filter}}_1) + 2 K(g^{Id}_{d,L}) + 2K(g^{\poly}_{\epsilon/2})\\
    &\leq  2 c_2^{\poly} R_S^p \epsilon^{-p/\beta} + 2(50d + 17 + c_2^{\poly})   N_r(S)\\
    &\quad+ 2(12 + 42 \times 5^d + 2d + 2d(11 + (1+\beta)/p)(2 + \log_2 \beta))),\\
    &\leq  2 c_2^{\poly} R_S^p \epsilon^{-p/\beta} + 2(50d + 17 + c_2^{\poly})c_0 (2d)^p\epsilon^{-p/\beta}  \\
    &\quad+ 2(12 + 42 \times 5^d + 2d + 2d(11 + (1+\beta)/p)(2 + \log_2 \beta))),
\end{align*}
where $c_2^{\poly}=O(d^{2+\lfloor\beta\rfloor})$. By adjusting several constants, we obtain the statement.

\section{Proof of Theorem \ref{thm:generalization}}\label{sec:proof-generalization}

The proof of Theorem \ref{thm:generalization} mainly follows \citet{chen2022nonparametric}. Our Lemma \ref{lemma:t1} and \ref{lemma:t2} is a slight revision of Lemma 5 and 6 in \citet{chen2022nonparametric}, where we substitute $\ell_\infty$ covering number with $\ell_2$ covering number to deal with unbounded domain. In this section, we compute the $\ell_2$ covering number of neural network class and then present the proof of  Theorem \ref{thm:generalization}.

\subsection{Proof of Lemma \ref{lemma:covering-number}}
\begin{proof}[Proof of Lemma \ref{lemma:covering-number}]
    To construct a covering for $\cF(H,L,B,K)$, we discretize each parameter by a unit grid with grid size $h$. Recall that we write $f \in \cF(H,L,B,K)$ as $f(x) = W_L \cdot \relu (W_{L-1}\cdots \relu(W_1 x +b_1) \cdots +b_{L-1}) +b_L$ in \eqref{eq: nn-form}. Choose any $f,f' \in \cF(H,L,B,K)$ with parameters at most $h$ apart from each other. Denote the weight matrices and intercepts in $f,f'$ as $W_L,\ldots,W_1,b_L,\ldots,L_1$ and $W_L',\ldots,W_1',b_L',\ldots,L_1'$ respectively, where $W_l \in \RR^{{d_l}\times d_{l-1}}$ and $b_l \in \RR^{d_l}$ for $l=1,\ldots,L$. Without loss of generality, we assume $d_l \leq K$ since all the parameters have at most $K$ nonzero entries. If the input dimension is larger than $K$, we let the redundant dimensions of input equal to zeros.
    
    Notice that for any random variable $y \in \RR^{d_{l-1}}$ which is subject to a distribution $P_Y$, we have
    \begin{equation*}
         \int_{\RR^{d_{l-1}}} \bignorm{ ( W_l y+ b_l)-( W_l' y+ b_l')}_2^2 \ud P_Y(y) =  \int_{\RR^{d_{l-1}}}   \bignorm{ \sum_{i=1}^{d_{l-1}} (W_{l,i} - W_{l,i}') y_i+(b_l- b_l') }_2^2 \ud P_Y(y).
    \end{equation*}
    By the inequality $\norm{t+s}^2 \leq 2\norm{t}^2 + 2\norm{s}^2$ which holds for any $s,t\in \RR^{d_l}$, we obtain
    \begin{align*}
        \int_{\RR^{d_{l-1}}} \bignorm{ ( W_l y+ b_l)-( W_l' y+ b_l')}_2^2 \ud P_Y(y) 
        \leq & 2 \int_{\RR^{d_{l-1}}}  \bignorm{\sum_{i=1}^{d_{l-1}}  (W_{l,i}-W_{l,i}') y_i  }^2 \ud P_Y(y)  + 2 \norm {b_l-b_l'}_2^2 \\
        \leq &  2 \sup_{i=1\ldots,d_{l-1}} \norm{ W_{l,i}-W_{l,i}' }_2^2 \cdot \int_{\RR^{d_{l-1}}}  \sum_{i=1}^{d_{l-1}} y_i^2  \ud P_Y(y)  + 2 \norm {b_l-b_l'}_2^2.
    \end{align*}
    Since parameters $W_l,b_l$ differ at most $h$ from $W_l',b_l'$ with respect to each entry, we get 
    \begin{align}\label{eq:difference1}
    \int_{\RR^{d_{l-1}}} \bignorm{ ( W_l y+ b_l)-( W_l' y+ b_l')}_2^2 \ud P_Y(y) 
        &\leq  2 d_{l-1} h^2 \norm{y}^2_{L_2(P_Y)} + 2 d_l h^2 \notag \\
        &\leq   2 K h^2 \norm{y}^2_{L_2(P_Y)} + 2 K h^2 .
    \end{align}
Similarly, we have
\begin{align}\label{eq:difference2}
    \int_{\RR^{d_{l-1}}} \bignorm{  W_l y+ b_l }_2^2 \ud P_Y(y) = & \int_{\RR^{d_{l-1}}}   \bignorm{ \sum_{i=1}^{d_{l-1}} W_{l,i}  y_i + b_l }_2^2 \ud P_Y(y) \notag \\
        \leq & 2 \int_{\RR^{d_{l-1}}}  \bignorm{ \sum_{i=1}^{d_{l-1}}  W_{l,i} y_i}^2 \ud P_Y(y)+ 2 \norm {b_l}_2^2 \notag  \\
        \leq &  2 \sup_{i=1\ldots,d_{l-1}} \norm{ W_{l,i} }_2^2 \cdot \int_{\RR^{d_{l-1}}}  \sum_{i=1}^{d_{l-1}} y_i^2  \ud P_Y(y)  + 2 \norm {b_l}_2^2 \notag \\
        \leq &  2 d_{l-1} B^2 \norm{y}^2_{L_2(P_Y)}  + 2 d_l B^2 \notag\\
        \leq &  2 K B^2 \norm{y}^2_{L_2(P_Y)}  + 2 K B^2.
\end{align}

Since the ReLU actiavtion function is $1$-Lipschitz continuous for each coordinate, we can apply \eqref{eq:difference1} and \eqref{eq:difference2} repeatedly to bound $ \norm{f-f'}^2_{L_2(P_{\rm data},S)}$:
\begin{align*}
  \norm{f-f'}^2_{L_2(P_{\rm data},S)}
  =& \int_{S}  \bigl|  W_L \cdot \relu (W_{L-1}\cdots \relu(W_1 x +b_1) \cdots +b_{L-1}) +b_L \\
  &\quad -  W_L' \cdot \relu (W_{L-1}'\cdots \relu(W_1' x +b_1') \cdots +b_{L-1}') -b_L'    \bigr|^2 \ud P_{\rm data}(x) \\
  \leq& 2 \int_{S}   \bigl|  W_L \cdot \relu (W_{L-1}\cdots \relu(W_1 x +b_1) \cdots +b_{L-1}) +b_L \\
  &\quad -  W_L' \cdot \relu (W_{L-1}\cdots \relu(W_1 x +b_1) \cdots +b_{L-1}) -b_L'  \bigr|^2 \ud P_{\rm data}(x)\\
  &\quad + 2 \norm{W_L'}_2^2 \int_{S}   \bigl\|   (W_{L-1}\cdots \relu(W_1 x +b_1) \cdots +b_{L-1})  \\
  &\quad -   (W_{L-1}' \cdots \relu(W_1' x +b_1') \cdots +b_{L-1}')  \bigr\|_2^2 \ud P_{\rm data}(x)\\
  \leq & 4 K h^2+ 4 K h^2  \int_{S}   \norm{  W_{L-1}\cdots \relu(W_1 x +b_1) \cdots +b_{L-1} }^2 \ud P_{\rm data}(x)\\ 
  &\quad +  2 K B^2 \int_{S}   \bigl\|   (W_{L-1}\cdots \relu(W_1 x +b_1) \cdots +b_{L-1})  \\
  &\quad -   (W_{L-1}' \cdots \relu(W_1' x +b_1') \cdots +b_{L-1}')  \bigr\|_2^2 \ud P_{\rm data}(x).
\end{align*}
Besides, we derive the following bound on $\norm{ W_{L-1}\cdots \relu(W_1 x +b_1) \cdots +b_{L-1}}_{L^2(P_{\rm data},S)}$:
\begin{align*}
    \int_{S}   &\norm{  W_{L-1}\cdots \relu(W_1 x +b_1) \cdots +b_{L-1} }^2 \ud P_{\rm data}(x) \\
    &\quad \leq  2KB^2 \int_{S}   \norm{  W_{L-2}\cdots \relu(W_1 x +b_1) \cdots +b_{L-2} }^2 \ud P_{\rm data}(x) +2KB^2 \\
    &\quad \leq (2KB^2)^{L-1} d R_S^2 +  (2KB^2)^{L-1} \\
    &\quad \leq 2^L(KB^2)^{L-1} d R_S^2,
\end{align*}
where the last inequality is derived by induction and $\|x \|^2 =\sum_{i=1}^d  x_i^2 \leq d R_S^2$ for any $x\in S$. Substituting back into the bound for $ \norm{f-f'}^2_{L_2(P_{\rm data},S)}$ , we obtain
\begin{align*}
    \norm{f-f'}^2_{L_2(P_{\rm data},S)} \leq& 4 K h^2+ 2^{L+2} K^L B^{2(L-1)} h^2 d R_S^2 \\
    & \quad+  4 K B^2 \int_{S}   \bigl\|   (W_{L-1}\cdots \relu(W_1 x +b_1) \cdots +b_{L-1})  \\
    &\quad -   (W_{L-1}' \cdots \relu(W_1' x +b_1') \cdots +b_{L-1}')  \bigr\|_2^2 \ud P_{\rm data}(x)\\
    \leq & 4 (L-1) (K B^2)^{L-1} h^2+ 2^{L+2} (L-1) K^L B^{2(L-1)} h^2 d R_S^2 \\
    & \quad + ( 2 K B^2)^{L-1} \int_{S}   \norm{ W_1 x +b_1  -   W_1' x -b_1' }_2^2 \ud P_{\rm data}(x)\\
    \leq & 4^{L-1} L K^L B^{2L} h^2 d R_S^2,
\end{align*}
where the second inequality is obtained by induction. Therefore, combining the above inequality with $\|f\|_\infty \leq 1$ for any $f \in\cF(L,B,K)$ and $P_{\rm data}(S^c) \leq \tau$, we get
\begin{align*}
    \norm{f-f'}^2_{L_2(P_{\rm data})} =& \int_{S}  | f(x) - f'(x) |^2  \ud P_{\rm data}(x) +  \int_{S^c}  | f(x) - f'(x) |^2  \ud P_{\rm data}(x) \\
    \leq &  4^{L-1} L K^L B^{2L-2} h^2 d R_S^2 + 4\tau.
\end{align*}
Now we choose $h$ satisfying $h=\sqrt{(\delta^2-4\tau)/(4^{L-1}L K^L B^{2L-2} d R_S^2)}$. Then discretizing each parameter uniformly into $2B/h$ grid points yields a $\delta$-covering on $\cF(L, B,K)$.  Moreover, the covering number $\cN_2(\delta, \cF(L, B,K))$ satisfies
\begin{align*}
    \log \cN_2(\delta, \cF( L, B,K)) \leq K \log\biggl( \frac{2B}{h} \biggr) = K \log\biggl( \frac{2^L \sqrt{dL} K^{L/2} B^L R_S }{\sqrt{\delta^2-4\tau}} \biggr).
\end{align*}
\end{proof}

\subsection{Proof of Theorem \ref{thm:generalization}}
\begin{proof}[Proof of Theorem \ref{thm:generalization}]
The square error of the estimator $\hat{f}$ can be decomposed into a squared bias term and a variance term, which can be bounded using the covering number of the function class. According to Lemmas \ref{lemma:t1} and \ref{lemma:t2}, for any constant $\delta\in(0,1)$, we have
\begin{equation}\label{eq: l2-bound}
    \begin{split}
        \EE \| \hat{f} - f^*\|^2_{L^2(P_{\rm data})} \leq& c \inf_{f \in \cF( L, B, K)} \norm{ f(x) - f^*(x)}^2_{L^2(P_{\rm data})}  + c \sigma^2 \frac{\log \cN_2(\delta, \cF( L, B, K)) + 2}{n} \\
        &\quad +  c\bigg(   \sqrt{\frac{\log \cN_2(\delta, \cF(L, B, K)) + 2}{n}} + 1 \bigg)\sigma \delta + \frac{c'}{3n} \log \cN_2(\delta/4H, \cF(L, B, K))+ c'\delta.
    \end{split}
\end{equation}

Choose $\epsilon=(2dr)^\beta $ in Theorem \ref{lemma:approx}.
Accordingly, we set tuple $(L,B,K)$ as
\begin{equation*}
     L=C_1,~~ B=O(R_S^{\beta s} r^{-s\beta}),~~ \text{and}~ K=C_2 R_S^p r^{-p}.
\end{equation*}
Then we have
\begin{equation*} 
    \inf_{f \in \cF( L, B, K)} \norm{ f(x) - f^*(x)}^2_{L^2(P_{\rm data})} \leq (2dr)^{2\beta} + 4\tau. 
\end{equation*} 
Invoking the upper bound of the covering number in Lemma \ref{lemma:covering-number}, we derive
\begin{align*}
    \EE \norm{ \hat{f} - f^*}^2_{L^2(P_{\rm data})} \leq& (2dr)^{2\beta} + 4\tau +  \frac{c\sigma^2}{n}  \bigg(  K \log\biggl( \frac{2^{L} \sqrt{dL} K^{L/2} B^L R_S }{\sqrt{\delta^2-4\tau}} \biggr) +2c \bigg) \\
        &\quad + c\sqrt{\frac{ K \log(2^{L} \sqrt{dL} K^{L/2} B^L R_S /\sqrt{\delta^2-4\tau}) +1}{n}} \sigma\delta \\
        &\quad + \frac{c'}{3n}   K \log\biggl( \frac{2^{L} \sqrt{dL} K^{L/2} B^L R_S }{\sqrt{\delta^2-4\tau}}  \biggr) +(c\sigma +c')\delta \\
        =& O\bigg( \tau + d^{2\beta} r^{2\beta} +  \frac{\sigma^2 K}{n} \log\biggl( \frac{ \sqrt{dL} K^{L/2} B^L R_S }{\sqrt{\delta^2-4\tau}}  \biggr) \\
        &\quad + \biggl(  \frac{K \log (\sqrt{dL} K^{L/2} B^L R_S /\sqrt{\delta^2-4\tau} ) }{n} \biggr)^{1/2}  \sigma \delta+ \sigma \delta+ \frac{1}{n}\bigg).
\end{align*}
By Cauchy-Schwartz inequality, for $0<\delta<1$, we have
\begin{align*}
    \EE \norm{ \hat{f} - f^*}^2_{L^2(P_{\rm data})} 
        =& O\biggl( \tau + d^{2\beta} r^{2\beta} +  \frac{\sigma^2 K}{n} \log\biggl( \frac{ \sqrt{dL} K^{L/2} B^L R_S }{\sqrt{\delta^2-4\tau}}  \biggr)+\sigma\delta+ \frac{1}{n}\biggr).
\end{align*}
Plugging in our choice of $(L,B,K)$, we get
\begin{align*}
    \EE \norm{ \hat{f} - f^*}^2_{L^2(P_{\rm data})} 
    = & O\biggl( \tau + d^{2\beta} r^{2\beta} + \frac{\sigma^2 d}{n}\biggl(\frac{R_S}{r} \biggr)^p \log\biggl( \frac{(R_S/r)^p}{ \delta^2-4\tau}\biggr)+ \sigma \delta + \frac{1}{n}\biggr)
\end{align*}
Now we choose  $\delta=r^{2\beta}$. Then we deduce the desired estimation error bound
\begin{align*}
    \EE \| \hat{f} - f^*\|^2_{L^2(P_{\rm data})} &= O\biggl(   \tau +  \sigma r^{2\beta} + \frac{\sigma^2}{n}  \biggl(\frac{R_S}{r} \biggr)^p \log\biggl( \frac{(R_S/r)^p}{ r^{4\beta}-4\tau}\biggr) + \frac{1}{n}\biggr)\\
    & = O\biggl(   \tau +  \sigma r^{2\beta} + \frac{\sigma^2}{n}  \biggl(\frac{R_S}{r} \biggr)^p \log\biggl( \frac{(R_S/r)^p}{ r^{4\beta}-4\tau}\biggr)\biggr).
\end{align*}
The last equality is due to $R_S/r>1$.
\end{proof}

\section{Proof of Lemma \ref{lemma:gaussian}}\label{proof:gaussian}

\begin{proof}[Proof of Lemma \ref{lemma:gaussian}]
    For simplicity of proof, set $Q=I$. By the construction of $S(R,r;p)$ in \eqref{eq:set}, we notice that
    \begin{align*}
        \PP(X \notin S(R,r;p) ) \leq & \PP\bigg(\sum_{i=1}^p \frac{x_i^2}{\lambda_i^2} >  R^2 \bigg) + \PP\bigg(|x_j|>\frac{r}{2} \text{ for some } j \in [p+1,d]\bigg).
    \end{align*}
    Since $X_{1:p}=(x_1,\ldots,x_p)\sim N(0,\Lambda_p)$ where $\Lambda_p=\diag(\lambda_1^2,\ldots,\lambda_p^2)$, the variable $Z=\Lambda_p^{-1/2} X_{1:p} \sim N(0,I_p)$ is a standard Gaussian. Then for any fixed $R>0$, the probability $\PP(\sum_{i=1}^p x_i^2/\lambda_i^2 >  R^2 )
    $ is equal to $ \PP ( \|Z\|^2 > R^2 )$. Moreover, by Lemma \ref{lemma:gaussian-tail}, if we choose $R^2>p$, we will have
    \begin{align*}
         \PP\big( \|Z\|^2 > R^2 \big) \leq&  \bigg( \frac{2R^2 +p}{p} \bigg)^{\frac{p}{2}} \exp \bigg(-\frac{R^4}{2R^2+p} \bigg) \\
         =&  \exp \bigg(-\frac{p}{2} \cdot \frac{R^4/p^2}{R^2/p+1/2} + \frac{p}{2} \cdot \log\bigg( \frac{2R^2}{p}+1 \bigg) \bigg) \\
         =& O \Bigg( \bigg[ \exp \bigg(-\frac{2R^2}{3p} + \log\bigg( \frac{3R^2}{p} \bigg) \bigg) \bigg]^{\frac{p}{2}} \Bigg).
    \end{align*} 
    Besides, by Lemma \ref{lemma:gaussian-tail-1}, for $j=p+1,\ldots,d$, we derive 
    \begin{align*}
        \PP\bigg(|x_j|>\frac{r}{2} \bigg) = \PP\bigg( \bigg| \frac{x_j}{\lambda_j} \bigg|>\frac{r}{2\lambda_i} \bigg) 
        =  O\bigg( \exp\bigg\{ -\frac{r^2}{8\lambda_j^2} \bigg\} \bigg).
    \end{align*}
    Then we can apply the union bound of probability to get
    \begin{align*}
        \PP\bigg(|x_j|> \frac{r}{2} \text{ for some } j \in [p+1,d]\bigg) \leq \sum_{j=p+1}^d  \PP\bigg(|x_j| > \frac{r}{2} \bigg) 
        =  O\bigg( \sum_{j=p+1}^d \exp\bigg( -\frac{r^2}{8\lambda_j^2} \bigg) \bigg).
    \end{align*}
    Recall that we choose $\lambda_p^{-1}=2R/r$. Then we have
    \begin{align*}
        \PP\bigg(|x_j|> \frac{r}{2} \text{ for some } j \in [p+1,d]\bigg) 
        = & O\bigg( \sum_{j=p+1}^d \exp\bigg( -\frac{\lambda_p^2}{2\lambda_j^2} R^2 \bigg) \bigg) \\
        \leq & O\bigg( \sum_{j=p+1}^d \exp\bigg( -\frac{R^2}{2} \bigg) \bigg)\\
        = & O\bigg( \exp\bigg( -\frac{R^2}{2}+\log(d-p) \bigg) \bigg),
    \end{align*}
    where the inequality comes from $\lambda_j^2 \leq \lambda_p^2$ for $j=p+1,\ldots,d$. Therefore, we have
    \begin{equation*}
        \PP(X \notin S(R,r;p) ) = O \Bigg( \bigg[ \exp \bigg(-\frac{2R^2}{3p} + \log\bigg( \frac{3R^2}{p} \bigg) \bigg) \bigg]^{\frac{p}{2}} \Bigg) + O\bigg( \exp\bigg( -\frac{R^2}{2}+\log(d-p) \bigg) \bigg).
    \end{equation*}  
    
    Next, we compute the covering number of $S(R,r;p)$ using hypercubes with side length $r>0$, which is denoted as $N_r(S(R,r;p))$. Notice that the first-$p$-dimensional hyper-ellipsoid of $S(R,r;p)$ is contained in a $p$-dimensional hyper-rectangle with side length $2\lambda_i R$ for $i=1,\ldots,p$, while only one hypercube is required to cover the $j$-th dimension for $j=p+1,\ldots,d$. With this observation,  we derive the upper bound for $N_r(S(R,r;p))$:
    \begin{align*}
        N_r(S(R,r;p)) \leq \prod_{i=1}^p \bigg( \frac{2\lambda_i R}{r}\bigg) 
        = \prod_{i=1}^p \bigg( \frac{\lambda_i}{\lambda_p}\bigg),
    \end{align*} 
    where the last equality results from our choice of $p$.
\end{proof}

\section{Proof of Theorem \ref{thm:generalization-exp}}\label{sec:dim-gaussian}

\subsection{Generalization error under exponential eigenvalue decay}\label{sec:dim-exp}
    
    Combining the criteria $\lambda_p^{-1}=2R/r$ and the exponential eigenvalue decay in Assumption \ref{assump:eigen-decay-exp}, we have
    \begin{equation*}
        \frac{1}{\mu} \exp(\theta p ) = \frac{2R}{r}.
    \end{equation*}
    Moreover, by Lemma \ref{lemma:gaussian}, we can compute the covering number of $S(R,r,p)$:
    \begin{align*}
        N_r(S(R,r;p)) &\leq \prod_{i=1}^p \bigg( \frac{\lambda_i}{\lambda_p}\bigg) =  \prod_{i=1}^p \exp(\theta(p-i)) \leq \exp\biggl( \frac{\theta p^2}{2} \biggr) =  \biggl( \frac{2\mu R}{r} \biggr)^{p/2},
    \end{align*}
    which indicates that effective Minkowski dimension of $P_{\rm data}$ is at most $p/2$. Let $r=n^{-(1-\eta)/(2\beta+\sqrt{\log n /\theta})}$ and $R=\log n$ where $\eta\in(0,1)$ is an arbitrarily small constant. Then we obtain
    \begin{align*}
        \theta p = \log\biggl( \frac{2\mu R}{r} \biggr)=  \frac{1}{\theta} \log( 2\mu)+ \log( \log n )+ \frac{(1-\eta)\log n}{2\beta +\sqrt{\log n /\theta}}  \leq \frac{2(1-\eta)\log n}{2\beta +\sqrt{\log n /\theta}}.
    \end{align*}
    Thereby, we can compute the probability ourside $S(R,r;p)$:
    \begin{align*}
        \PP(X \notin S(R,r;p) ) =& O \Biggl( \biggl[ \exp \bigg(-\frac{2R^2}{3p} + \log\bigg( \frac{3R^2}{p} \bigg) \bigg) \biggr]^{\frac{p}{2}}  + \exp\biggl( -\frac{R^2}{2}+\log(d-p)  \biggr) \Biggr) = O(n^{-\log n/3}).
    \end{align*}
    
    Apply Theorem \ref{thm:generalization} with our choice of $R$, $r$ and $p$. Accordingly, the tuple $(L,B,K)$ is set as
    \begin{equation*}
        L= C_1,~~B=O\biggl(n^{\frac{\beta s}{2\beta+\sqrt{\log n/\theta}}} (\log n)^{\beta s} \biggr), ~~ \text{and}~K=O\biggl(n^{\frac{\sqrt{\log n/\theta}}{2\beta + \sqrt{\log n/\theta} } } \biggr).
    \end{equation*}
    Then we can get
    \begin{align*}
        \EE \norm{ \hat{f} - f^*}^2_{L^2(P_{\rm data})} &= O\biggl(  \PP(X \notin S(R,r;p) ) + \sigma r^{2\beta} + \frac{\sigma^2}{n} \cdot \biggl(\frac{R_S}{r} \biggr)^{p/2} \log\biggl( \frac{(R_S/r)^{p/2}}{ r^{4\beta}-4\tau}\biggr) \biggr)\\
        &= O\biggl(  \sigma n^{- \frac{2\beta(1-\eta)}{2\beta+\sqrt{\log n/\theta}}} + \frac{\sigma ^2}{n} \cdot n^ \frac{(1-\eta)^2\log n /\theta}{ (2\beta +\sqrt{\log n /\theta})^2} \cdot (\log n)^\frac{(1-\eta)\log n /\theta}{2\beta +\sqrt{\log n /\theta}} \cdot \frac{\log n/\theta}{2\beta +\sqrt{\log n /\theta}}\cdot \log n \biggr)\\
        &= O\biggl( \sigma  n^{- \frac{2\beta(1-\eta)}{2\beta+\sqrt{\log n/\theta}}} + \sigma^2  n^{-1 + \frac{(1-\eta)^2\log n /\theta}{ (2\beta +\sqrt{\log n /\theta})^2} + \frac{(1-\eta)\log(\log n)/\theta}{2\beta +\sqrt{\log n /\theta}} }   \cdot (\log n)^{3/2} \biggr).
    \end{align*}
    The last equality utilizes the fact that $(\log n)^{\log n}=n^{\log(\log n)}$. Furthermore, notice that for sufficiently large $n$ satisfying $\log(\log n) / \sqrt{\theta \log n} \leq \eta$, we have
    \begin{align*}
        \frac{(1-\eta)^2\log n /\theta}{ (2\beta +\sqrt{\log n /\theta})^2} + \frac{(1-\eta)\log(\log n)/\theta}{2\beta +\sqrt{\log n /\theta}} &\leq  \frac{(1-\eta)^2\log n /\theta}{ (2\beta +\sqrt{\log n /\theta})^2} + \frac{(1-\eta)\eta \sqrt{\log n/\theta}}{2\beta +\sqrt{\log n /\theta}} \\
        &\leq    \frac{(1-\eta)^2 \sqrt{\log n /\theta} }{ 2\beta +\sqrt{\log n /\theta}} + \frac{(1-\eta)\eta \sqrt{\log n/\theta}}{2\beta +\sqrt{\log n /\theta}}\\
        &\leq \frac{(1-\eta) \sqrt{\log n/\theta}}{2\beta +\sqrt{\log n /\theta}}.
    \end{align*}
     Therefore, we use the above observation to derive the following upper bound for generalization error:
    \begin{align*}
         \EE \norm{ \hat{f} - f^*}^2_{L^2(P_{\rm data})} = O\Bigl( \sigma^2 n^{- \frac{2\beta(1-\eta)}{2\beta+\sqrt{\log n/\theta}}}(\log n)^{3/2} \Bigr).
    \end{align*}

\subsection{Generalization error under polynomial eigenvalue decay}\label{sec:dim-poly}
    
     Similarly to last section, we firstly combine the criteria $\lambda_p^{-1}=2R/r$ and the polynomial eigenvalue decay in Assumption \ref{assump:eigen-decay-poly},     \begin{equation*}
         p = \biggl( \frac{2\rho R}{r} \biggr)^{1/\omega}.
    \end{equation*}
    Moreover, by Lemma \ref{lemma:gaussian}, we can compute the covering number of $S(R,r,p)$:
    \begin{align*}
        N_r(S(R,r;p)) &\leq \prod_{i=1}^p \bigg( \frac{\lambda_i}{\lambda_p}\bigg) =  \prod_{i=1}^p \biggl(\frac{p}{i}\biggr)^\omega = \biggl(\frac{p^p}{p!}\biggr)^\omega \leq \biggl(\frac{p^p}{p^{p/2}}\biggr)^\omega =  p^{\omega p/2} =  \biggl(\frac{2\rho R}{r}\biggr)^{p/2} ,
    \end{align*}
    which indicates that effective Minkowski dimension of $P_{\rm data}$ is at most $p/2$. Let $r=n^{-1/(2\beta+ n^\kappa)}$ and $R=n^{1/(2\omega\beta+ \omega n^\kappa)}$ with $\kappa = (1+1/\omega)/\omega$. Then we obtain
    \begin{align*}
        p = \biggl( \frac{2\rho R}{r} \biggr)^{1/\omega} =  n^{\frac{(1+1/\omega)/\omega}{2\beta+ n^\kappa}} = n^{\frac{\kappa}{2\beta+ n^\kappa}}.
    \end{align*}
    Thereby, we can compute the probability outside $S(R,r;p)$:
    \begin{align*}
        \PP(X \notin S(R,r;p) ) =& O \Biggl( \biggl[ \exp \bigg(-\frac{2R^2}{3p} + \log\bigg( \frac{3R^2}{p} \bigg) \bigg) \biggr]^{\frac{p}{2}}  + \exp\biggl( -\frac{R^2}{2}+\log(d-p)  \biggr) \Biggr) = O\Bigl(\exp\Bigl(-n^{\frac{2}{(2\omega\beta+ \omega n^\kappa)}}/3\Bigr) \Bigr).
    \end{align*}
    
    Apply Theorem \ref{thm:generalization} with our choice of $R$, $r$ and $p$. Accordingly, the tuple $(L,B,K)$ is set as
    \begin{equation*}
        L= C_1,~~B=O\Bigl(n^{\frac{(1+1/\omega)\beta s}{2\beta+ n^\kappa } } \Bigr), ~~ \text{and}~K=O\Bigl(n^{\frac{(1+1/\omega) n^{\kappa/(2\beta+n^\kappa)}}{4\beta + 2n^\kappa}} \Bigr).
    \end{equation*}
    Then we have
    \begin{equation}\label{eq:poly-bound}
    \begin{split}
        \EE \norm{ \hat{f} - f^*}^2_{L^2(P_{\rm data})} &= O\biggl(  \PP(X \notin S(R,r;p) ) + \sigma  r^{2\beta} + \frac{\sigma^2}{n} \cdot \biggl(\frac{R_S}{r} \biggr)^{p/2} \log\biggl( \frac{(R_S/r)^{p/2}}{ r^{4\beta}-4\tau}\biggr) \biggr)\\
        &= O\biggl(\sigma n^{- \frac{2\beta}{2\beta+ n^\kappa}} + \frac{\sigma ^2}{n} \cdot n^{ \frac{1+1/\omega}{2\beta+ n^\kappa}\cdot\frac{1}{2}{n^{\frac{\kappa}{2\beta+ n^\kappa}}}} \cdot n^{\frac{\kappa}{2\beta+ n^\kappa}} \log n\biggr)\\
        &= O\Bigl( \sigma  n^{- \frac{2\beta}{2\beta+ n^\kappa}} + \sigma^2 n^{-1+ \frac{1+1/\omega}{2\beta+ n^\kappa}\cdot\frac{1}{2}{n^{\frac{\kappa}{2\beta+ n^\kappa}}} + \frac{\kappa}{2\beta+ n^\kappa}} \log n\Bigr).
    \end{split}
    \end{equation}
    Notice that
    \begin{align*}
        \frac{1+1/\omega}{2\beta+ n^\kappa}\cdot\frac{1}{2}{n^{\frac{\kappa}{2\beta+ n^\kappa}}} + \frac{\kappa}{2\beta+ n^\kappa} &= \frac{1+1/\omega}{2\beta+ n^\kappa} \biggl( \frac{1}{2}{n^{\frac{\kappa}{2\beta+ n^\kappa}}} + \frac{1}{\omega}   \biggr) \\
        & \leq \frac{2}{2\beta+ n^\kappa} \biggl( \frac{1}{2}{n^{\frac{\kappa}{2\beta+ n^\kappa}}} +1 \biggr) \\
        &\leq \frac{n^\kappa}{2\beta+ n^\kappa} , 
    \end{align*}
    where the first inequality is due to $\omega>1$. Therefore, plug the above inequality in \eqref{eq:poly-bound}, we derive the following upper bound for generalization error:
    \begin{align*}
         \EE \norm{ \hat{f} - f^*}^2_{L^2(P_{\rm data})} = O\Bigl( \sigma^2 n^{- \frac{2\beta}{2\beta+ n^\kappa}} \log n\Bigr).
    \end{align*}

\section{Auxiliary Lemmas}

In this section, we investigate the probability tail bound of standard Gaussian variable, which is useful for the proof of Lemma \ref{lemma:gaussian}.  At first, we compute the tail bound for multivariate Gaussian variable.
\begin{lemma}\label{lemma:gaussian-tail-1}
    Suppose $Z=(z_1,\ldots,z_p)\sim N(0,I_p)$ is a standard Gaussian variable in $\RR^p$. Then for any $t>0$, we have
    \begin{equation*}
       \PP\big( \|Z\| > t \big) \leq \bigg( \frac{2t^2 +p}{p} \bigg)^{\frac{p}{2}} \exp \bigg(-\frac{t^4}{2t^2+p} \bigg).
    \end{equation*}
\end{lemma}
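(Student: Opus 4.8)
The plan is to recognize that $\norm{Z}^2 = \sum_{i=1}^p z_i^2$ follows a chi-squared distribution with $p$ degrees of freedom, and to control its upper tail via a Chernoff (exponential Markov) bound. First I would rewrite the event as $\{\norm{Z} > t\} = \{\norm{Z}^2 > t^2\}$ and, for any $\lambda \in (0, 1/2)$, apply Markov's inequality to $\exp(\lambda \norm{Z}^2)$ to obtain $\PP(\norm{Z} > t) \leq e^{-\lambda t^2}\, \EE[\exp(\lambda \norm{Z}^2)]$.

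The next step is to evaluate the moment generating function. Since the coordinates $z_i$ are i.i.d. $N(0,1)$, independence gives $\EE[\exp(\lambda \norm{Z}^2)] = \prod_{i=1}^p \EE[\exp(\lambda z_i^2)] = (1 - 2\lambda)^{-p/2}$, where the single-coordinate Gaussian integral $\EE[\exp(\lambda z_i^2)] = (1-2\lambda)^{-1/2}$ is finite precisely when $\lambda < 1/2$. Combining the two estimates yields the family of bounds $\PP(\norm{Z} > t) \leq e^{-\lambda t^2}(1-2\lambda)^{-p/2}$, valid for every $\lambda \in (0, 1/2)$.

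The crux is then the choice of $\lambda$. Rather than optimizing exactly --- the minimizer $\lambda = (t^2 - p)/(2t^2)$ is positive only when $t^2 > p$ and would restrict the statement --- I would substitute the clean value $\lambda = t^2/(2t^2 + p)$. One checks immediately that $0 < \lambda < 1/2$ for every $t > 0$, so the bound applies unconditionally. With this choice $1 - 2\lambda = p/(2t^2 + p)$, so that $(1-2\lambda)^{-p/2} = ((2t^2+p)/p)^{p/2}$ and $e^{-\lambda t^2} = \exp(-t^4/(2t^2+p))$; multiplying these two factors gives exactly the claimed bound. There is no serious obstacle here beyond selecting this sub-optimal-but-universal $\lambda$: the only subtlety is that the tighter bound obtained from exact optimization would sacrifice validity for small $t$, whereas the stated form is deliberately chosen to hold for all $t > 0$, which is what the application in Lemma \ref{lemma:gaussian} requires.
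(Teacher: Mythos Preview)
Your proposal is correct and follows essentially the same approach as the paper: both apply the Chernoff/Markov inequality to $\exp(\lambda\norm{Z}^2)$, compute the chi-squared moment generating function $(1-2\lambda)^{-p/2}$ via independence of the coordinates, and then substitute the same value $\lambda = t^2/(2t^2+p)$ to obtain the stated bound. Your additional remark explaining why this sub-optimal $\lambda$ is preferred over the exact minimizer (namely, to keep the bound valid for all $t>0$) is a nice clarification that the paper does not make explicit.
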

\begin{proof}
    By the Markov's inequality, for any $\mu\in(0,1/2)$, we have
    \begin{align*}
        \PP\big( \|Z\| > t \big) =& \PP \big(\exp(\mu \|Z\|^2) > \exp(\mu t^2) \big) \\
        \leq & \frac{\EE \exp(\mu \|Z\|^2) }{\exp(\mu t^2)} \\
        =&  \frac{ \prod_{i=1}^p \EE \exp(\mu z_i^2) }{\exp(\mu t^2)} ,
    \end{align*}
    where the last equality comes from the independence of $z_i$'s.
To bound $\EE \exp(\mu z_i^2)$, we first examine the moment generating function of $z_i$: for any $t\in\RR$,
\begin{align*}
    \EE \exp(t z_i) = \int_\RR \exp(tw)\phi(w) \ud w = \exp(t^2/2),
\end{align*}
where $\phi(w)=(2\pi)^{-p/2} \exp(-w^2/2)$ denotes the probability density function of stardard Gaussian. Then multiply $\exp(-t^2/(2\mu))$ on both sides,
\begin{align*}
    \int_\RR \exp\bigg(tw-\frac{t^2}{2\mu} \bigg)\phi(w) \ud w = \exp\bigg(\frac{t^2(\mu-1)}{2\mu} \bigg).
\end{align*}
By integrating both sides with respect to $t$, we have
\begin{equation*}
    \sqrt{2\pi\mu}  \int_\RR \exp\bigg( \frac{\mu w^2}{2} \bigg)\phi(w) \ud w = \sqrt{\frac{2\pi\mu}{1-\mu} },
\end{equation*}
which indicates
\begin{align*}
    \EE \exp(\mu z_i^2) = \EE \exp\bigg( \frac{2\mu z_i^2}{2} \bigg) = \sqrt{\frac{1}{1-2\mu}}.
\end{align*}
Therefore, for any $\mu \in (0,1/2)$, we have
\begin{align*}
    \PP\big( \|Z\| > t \big) \leq (1-2\mu)^{-\frac{p}{2}}\exp(-\mu t^2).
\end{align*}
Let $\mu = t^2/(2t^2+p)$ and thereby we can conclude the proof of the lemma.

\end{proof}

For standard Gaussian in $\RR$, we derive a tighter upper bound in the following lemma.
\begin{lemma}\label{lemma:gaussian-tail}
    Suppose $z\sim N(0,1)$ is a standard Gaussian variable in $\RR$. Then for any $t>0$, we have
    \begin{equation*}
       \PP\big( \norm{z} > t \big) \leq  \exp\bigg(-\frac{1}{2}t^2\bigg).
    \end{equation*}
\end{lemma}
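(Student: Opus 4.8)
The plan is to prove the clean bound $\PP(|z| > t) \le \exp(-t^2/2)$ by a direct monotonicity argument on the gap function, rather than through the moment-generating-function route used for Lemma \ref{lemma:gaussian-tail-1}. The reason is a subtle but decisive point: the standard Chernoff estimate only delivers $\PP(z > t) \le \exp(-t^2/2)$ for the one-sided tail, whence $\PP(|z| > t) = 2\PP(z > t) \le 2\exp(-t^2/2)$, which is off by a factor of two from the stated inequality. Shaving this factor is exactly the crux, and the full integral behaviour (not just the exponential moment) is needed to see it.

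First I would introduce the gap function $h(t) := \exp(-t^2/2) - \PP(|z| > t)$ on $t \in [0,\infty)$ and reduce the claim to showing $h(t) \ge 0$ throughout. Writing $\PP(|z| > t) = \sqrt{2/\pi}\int_t^\infty \exp(-x^2/2)\,\ud x$ by symmetry of the standard Gaussian, I would record the two boundary values: at $t=0$ both terms equal $1$ (using $\int_0^\infty \exp(-x^2/2)\,\ud x = \sqrt{\pi/2}$), so $h(0)=0$; and as $t \to \infty$ both $\exp(-t^2/2)$ and the tail probability vanish, so $\lim_{t\to\infty} h(t) = 0$.

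Next I would differentiate. Since $\frac{\ud}{\ud t}\PP(|z| > t) = -\sqrt{2/\pi}\,\exp(-t^2/2)$, I obtain $h'(t) = \exp(-t^2/2)\bigl(\sqrt{2/\pi} - t\bigr)$. This exhibits a single critical point at $t^\star = \sqrt{2/\pi}$, with $h$ strictly increasing on $(0, t^\star)$ and strictly decreasing on $(t^\star, \infty)$. Combining this unimodal shape with the boundary data — $h$ rises from $h(0)=0$ to a positive maximum at $t^\star$, then descends monotonically back toward its limit $0$ — forces $h(t) \ge 0$ for every $t \ge 0$, which is precisely the asserted bound.

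The only point demanding a little care is the sign conclusion on the decreasing branch: because $h$ is strictly decreasing on $(t^\star,\infty)$ and tends to $0$ at infinity, each finite value there strictly exceeds its limit, so $h > 0$ on $(t^\star,\infty)$; together with $h \ge 0$ on $[0,t^\star]$ this covers all of $[0,\infty)$. I do not expect any genuine obstacle beyond faithfully tracking this factor-of-two improvement over the naive Chernoff/union estimate, which is what distinguishes the stated bound from the routine one.
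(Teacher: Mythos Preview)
Your argument is correct. The boundary check $h(0)=0$, the derivative computation $h'(t)=e^{-t^2/2}\bigl(\sqrt{2/\pi}-t\bigr)$, and the unimodality conclusion are all sound, and your handling of the decreasing branch via the limit at infinity is the right way to close it.

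The paper takes a different and slightly more direct route: it works on the one-sided tail first, substituting $u=z-t$ in $\PP(z>t)=\int_t^\infty \tfrac{1}{\sqrt{2\pi}}e^{-z^2/2}\,\ud z$ to obtain
\[
\PP(z>t)=e^{-t^2/2}\int_0^\infty e^{-tu}\cdot\tfrac{1}{\sqrt{2\pi}}e^{-u^2/2}\,\ud u,
\]
and then simply uses $e^{-tu}\le 1$ for $t,u\ge 0$ to bound the remaining integral by $\tfrac12$, yielding $\PP(z>t)\le\tfrac12 e^{-t^2/2}$ directly; doubling gives the claim. This bypasses the monotonicity analysis entirely and, as a by-product, isolates the sharper one-sided statement $\PP(z>t)\le\tfrac12 e^{-t^2/2}$, which is exactly the factor-of-two gain over Chernoff that you identified as the crux. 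Your approach, on the other hand, makes transparent \emph{why} the two-sided bound is tight at $t=0$ and where the slack comes from as $t$ grows, which is a nice structural insight the paper's computation does not surface.
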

\begin{proof}
    Firstly, for any $t>0$, compute the probability that $z>t$:
    \begin{align*}
        \PP( z > t ) =& \int_t^\infty \frac{1}{\sqrt{2\pi}}\exp\bigg( -\frac{1}{2}z^2 \bigg) \ud z \\
        = & \int_0^\infty \frac{1}{\sqrt{2\pi}}\exp\bigg( -\frac{1}{2}(u+t)^2 \bigg) \ud u \\
        =& \exp\bigg(-\frac{1}{2}t^2 \bigg)\int_0^\infty \exp(-tu)\cdot\frac{1}{\sqrt{2\pi}}\exp\bigg( -\frac{1}{2}u^2 \bigg) \ud u \\
        \leq & \exp\bigg(-\frac{1}{2}t^2 \bigg)\int_0^\infty \frac{1}{\sqrt{2\pi}}\exp\bigg( -\frac{1}{2}u^2 \bigg) \ud u \\
        = & \frac{1}{2}\exp\bigg(-\frac{1}{2}t^2 \bigg).
    \end{align*}
    Then notice that
\begin{align*}
    \PP\big( \norm{z} > t \big) = \PP( z > t ) + \PP( z < -t ) = 2\PP( z > t ).
\end{align*}
Thereby, we can conclude the proof.
\end{proof}

\end{document}